\pgfplotsset{compat=1.18}
\definecolor{smasheddata}{RGB}{24,196,255}
\definecolor{auxmodel}{RGB}{38,142,41}
\definecolor{localop}{RGB}{191,98,22}
\definecolor{fedavging}{RGB}{237,28,36}
\icmltitlerunning{FSL-SAGE: Accelerating Federated Split Learning via Smashed Activation
Gradient Estimation}
\begin{document}

%textwidth in cm: \printinunitsof{cm}\prntlen{\textwidth}
%columnwidth in cm: \printinunitsof{cm}\prntlen{\columnwidth}
%textwidth in inch: \printinunitsof{in}\prntlen{\textwidth}
%columnwidth in inch: \printinunitsof{in}\prntlen{\columnwidth}

%\captionsetup{belowskip=-20pt,aboveskip=0pt}
%\setlength{\belowcaptionskip}{-10cm}

\flushbottom
\twocolumn[

%% ***** REVIEW TITLE *******
\icmltitle{FSL-SAGE: Accelerating Federated Split Learning via Smashed Activation Gradient
Estimation}

\begin{icmlauthorlist}
\icmlauthor{Srijith Nair}{osu}
\icmlauthor{Michael Lin}{osu}
\icmlauthor{Peizhong Ju}{uky}
\icmlauthor{Amirreza Talebi}{osui}
\icmlauthor{Elizabeth Serena Bentley}{afrl}
\icmlauthor{Jia Liu}{osu}
\end{icmlauthorlist}

\icmlaffiliation{osu}{Department of Electrical and Computer Engineering, The Ohio State University, Columbus, Ohio, USA}
\icmlaffiliation{osui}{Department of Industrial Engineering, The Ohio State University, Columbus, Ohio, USA}
\icmlaffiliation{uky}{Department of Computer Science, University of Kentucky, Lexington, Kentucky, USA}
\icmlaffiliation{afrl}{Air Force Research Laboratory, Rome, New York, USA}

\icmlcorrespondingauthor{Srijith Nair}{nair.203@osu.edu}
\icmlcorrespondingauthor{Jia Liu}{liu@ece.osu.edu}

% You may provide any keywords that you
% find helpful for describing your paper; these are used to populate
% the "keywords" metadata in the PDF but will not be shown in the document
\icmlkeywords{Federated Split Learning, Resource efficient FL, Parallel SL, Auxiliary Models, Gradient Estimators}

\vskip 0.3in
]

% This command actually creates the footnote in the first column
% listing the affiliations and the copyright notice.
% The command takes one argument, which is text to display at the start of the footnote.
% The \icmlEqualContribution command is standard text for equal contribution.
% Remove it (just {}) if you do not need this facility.

%\printAffiliationsAndNotice{}  % leave blank if no need to mention equal contribution
\printAffiliationsAndNotice{\emph{Distribution A. Approved for public release: Distribution Unlimited: AFRL-2025-2998 on 16 JUN 2025}} % otherwise use the standard text.

% ------------------------------------------------------------------------------
\begin{abstract}

Collaborative training methods like Federated Learning (FL) and Split Learning
(SL) enable distributed machine learning without sharing raw data. 
However, FL assumes clients can train entire models, which is infeasible for large-scale
models.
In contrast, while SL alleviates the client memory constraint in FL by offloading most training to the server, it increases network latency due to its sequential nature. 
Other methods address the conundrum by using local loss functions for parallel client-side training to improve efficiency, but they lack server feedback and potentially suffer poor accuracy. 
We propose FSL-SAGE (\ul{F}ederated \ul{S}plit \ul{L}earning via \ul{S}mashed \ul{A}ctivation \ul{G}radient \ul{E}stimation), a new federated split learning algorithm that estimates server-side gradient feedback via auxiliary models.
These auxiliary models periodically adapt to emulate server behavior on local
datasets. 
We show that FSL-SAGE achieves a convergence rate of $\mathcal{O}(1/\sqrt{T})$, where $T$ is the number of communication rounds.
This result matches FedAvg, while significantly reducing communication costs and
client memory requirements. 
Our empirical results also verify that it outperforms existing state-of-the-art
FSL methods, offering both communication efficiency and accuracy.
%\corr{Our source code is available at {\small \url{https://github.com/srijith1996/FSL-SAGE}}}.
%\blue{Our empirical results verify that it outperforms existing state-of-the-art
%FSL methods, improving accuracy, and reducing communication by $2\times$}.
%Our anonymized source code is available at {\small \url{https://anonymous.4open.science/r/FSL-SAGE-5FEB}}.
% ----- I have left the original sentence as this comment below.
  %To complete...
  %The abstract paragraph should be indented \nicefrac{1}{2}~inch (3~picas) on
  %both the left- and right-hand margins. Use 10~point type, with a vertical
  %spacing (leading) of 11~points.  The word \textbf{Abstract} must be centered,
  %bold, and in point size 12. Two line spaces precede the abstract. The abstract
  %must be limited to one paragraph.
\end{abstract}

% ------------------------------------------------------------------------------
\section{Introduction} \label{sec:intro}
% ------------------------------------------------------------------------------
%\subsection{Background \& Motivation}
{\bf 1) Background and Motivation:}
In the recent years, large foundation models
\cite{Devlin:19:BERT,Radford:21:CLIP,Ramesh:22:DALLE2} and LLMs
\cite{OpenAI:24:GPT4,Touvron:23:LLaMA} pretrained on large corpora of
data have demonstrated near human-level performance on a variety of tasks.
These models demand heavy computational resources and very large datasets for
training \cite{Kaplan:20:preprint,Hoffmann:22:NIPS}.
The datasets are often owned by different organizations,
% often distributed over a large number of organizations
making it impossible for a single entity to train a model on all the data
\cite{Sani:24:NIPS}.
Distributed learning paradigms like federated learning (FL)
\cite{McMahan:16:CoRR} and split learning (SL) \cite{Vepakomma:18:preprint},
which preserve data privacy while leveraging the compute power of various
clients, have become increasingly popular in LLM pretraining
\cite{Sani:24:preprint:Photon} and fine-tuning
\cite{Sun:24:ICLR,Wang:24:NIPS}.

Despite their benefits, FL and SL both have some limitations.
Although FL can preserve data privacy by training the model on the client and
aggregating the client-side models at a server,
% aggregates the information learned by each client-side model at a server,
an inherent assumption of FL is that client devices have enough
computational power to process the model, which is impractical for large models.
SL, on the other hand, reduces compute requirements for clients by splitting the
model into two parts, processing a smaller piece at the client, and offloading
the other piece to the server.
This, however, implies that processing the complete model in each iteration involves transmitting the cut-layer features and gradients back and forth between the client and the server.
Moreover, SL loses the benefits of data parallelism in FL, since the server needs to assist gradient computation for each client in a sequential fashion.
Consequently, SL suffers significant increases in communication load, power, and latencies.

To reduce the communication overheads incurred by SL, some recent works \cite{Han:21:ICML,Mu:23:ICC} propose to use \emph{auxiliary models} at the clients.
%To reduce the communication overheads incurred by conventional SL, some recent works \cite{Han:21:ICML,Mu:23:ICC} propose to use \emph{auxiliary models} at the clients.
These auxiliary models continue the forward propagation of the client-side model
to compute a \emph{local} loss function, which is then used to update both
models via gradient descent.
This eliminates the need for awaiting the server's response, which significantly speeds up the training process.
Meanwhile, the server-side model is updated at a lower frequency by using the cut-layer
features from the clients.
While enjoying the benefit that the clients can conduct training in
parallel, this can significantly reduce communication costs and latencies.
%This approach enjoys the benefit that the clients can conduct training in
%parallel and can significantly cut down on communication costs and latencies.
However, the client-side models do not get any feedback from the server-side model, i.e.,
the clients are only updated using the auxiliary models and thus never get
trained via the server-side model.

% \kevin{Could remove or move this paragraph to related work section.}
% A similar ``part-by-part'' learning scheme has been studied in the classical
% centralized ML setting in, e.g., modular training
% \cite{Tang:21:JSEN,Teng:20:JSEN,Belilovsky:20:ICML} or greedy layer-wise
% training \cite{Hinton:06:NC,Yoshua:06:NIPS}.
% While there are several well-known benefits to modular training via auxiliary
% networks (e.g., reduced memory footprint and better scaling to datasets), the
% main drawback of these approaches is the \emph{stagnation problem}
% \cite{Wang:21:preprint,Pyeon:21:ICLR,Karkar:23:NIPS}, wherein the early layers
% tend to overfit the discriminative features in the data, thus learning
% suboptimal representations for the task at hand.
% \citet{He:20:NIPS} uses a slightly different approach by adding the KL
% divergence between the server-side and auxiliary logits, to the loss functions
% used for training.  Although, this enables indirect feedback between the
% client and the server-side models, it involves modifying the original objective
% function used for training in FL, and thus isn't directly comparable.

To overcome these limitations, we propose a new approach called
%In the light of these observations, we propose a new approach called
\ul{F}ederated \ul{S}plit \ul{L}earning via \ul{S}mashed \ul{A}ctivation \ul{G}radient \ul{E}stimation (FSL-SAGE).
Note that the auxiliary model in other FSL frameworks can be interpreted as
mimicking the server-side model at each client.
Since, from the client-side model's perspective, the role of the server-side model is to
provide the gradients of the loss function with respect to the cut-layer
features, we train the auxiliary models to estimate these cut-layer
gradients returned by the server-side model, which are then used to update the client-side models.
This allows FSL-SAGE to continue to enjoy the benefits of FL and SL, while using the auxiliary models to accurately play the server's role.
Also, we propose to update the auxiliary models periodically in a less frequent fashion, thus significantly reducing communication costs.

%\kevin{Srijith, you should discuss the major challenges of establishing convergence of FSL-SAGE.}
{\bf 2) Technical Challenges:}
However, due to the following technical challenges, establishing finite-time convergence rate performance guarantee for FSL-SAGE is highly non-trivial and necessitates new proof and analysis techniques.
First, as will be shown later in \figref{schematic}, there are multiple {\em coupled} time-scales involved during training, which include 1) the local step, whereby the client-side model is updated over a mini-batch of local data; 2) the lesser frequent server step, where the server-side model is updated given the cut-layer features; 3) a
federated averaging step for the client-side models; and finally, 4) the least
frequent \emph{alignment} step, where the auxiliary models are updated to match
the server-side model.
Also, one needs to account for the auxiliary alignment step, which has {\em not} been studied in conventional SL methods (e.g., SplitFed \cite{Thapa:22:AAAI} or in the
precursor to this work \cite{Mu:23:ICC}).
Exacerbating the problems in FSL-SAGE convergence rate analysis is the fact that the auxiliary models inject highly heterogeneous estimation errors during the training process, which are further coupled with all four time-scales.

% When compared to Theorem  in \cite{Mu:23:ICC}, we
% give a convergence result on the joint model as opposed to the client-side and
% server-side models separately.

%However, under a mild assumption of learnability of the auxiliary models, our
%achieves a finite-time convergence rate of $\bigO{1/\sqrt{T}}$, which is the
%same as FedAvg.
%While the convergence of FL has been extensively studied \cite{Yu:19:AAAI,Yang:21:ICLR,Karimireddy:19:CoRR}
%is relatively straightforward under smoothness of the cost function and variance
%bounds on stochastic gradients.
%We show an additional learnability assumption on the auxiliary models.

{\bf 3) Main Contributions:}
The key contribution of this paper is that we overcome the aforementioned technical challenges and rigorously establish the finite-time convergence rate guarantees of our proposed FSL-SAGE.
Our main technical results are summarized as follows: 
%\kevin{Summarize the main theoretical results and their contributions in several more detailed bullets}:

\vspace{-1em}
\begin{list}{\labelitemi}{\leftmargin=1.5em \itemindent=-0.0em \itemsep=-.2em}
  \item We propose a new federated split learning algorithm called FSL-SAGE to train models that are too large to be trained on commodity client devices.
  By using a periodically aligned auxiliary model at each client to estimate the server-side model response, FSL-SAGE enables large model federated training, while enjoying low communication costs and the same data parallelism as in conventional FL.
  \item We establish a general finite-time stationarity convergence rate bound for FSL-SAGE. Based on this general result, we further show that under a mild in-expectation PAC-learnability assumption on the auxiliary models, FSL-SAGE can achieve an
    $\mathcal{O}(1/\sqrt{T})$ convergence rate, which is the same as those of state-of-the-art FL algorithms even with heterogeneous datasets. 
    Our analysis sheds theoretical lights on the impacts of the class of auxiliary model functions on the final-time of convergence, which could be of independent interests to other FSL methods with auxiliary models.
  \item We further propose a ``lazy version'' of our FSL-SAGE method, where the auxiliary models are frozen beyond a certain point of alignment.
  This lazy version can be useful in practice to further reduce communication costs.
  We rigorously establish an explicit trade-off between this time and the accuracy of the final model.  
  \item We conduct extensive experiments using large ResNet-18 and GPT2-medium models to verify our theoretical results and demonstrate that while being much more communication efficient than existing %other
  state-of-the-art FL/SL algorithms, the
  accuracy of FSL-SAGE is either on-par or even better.
  %than existing FL/SL methods.
\end{list}
\section{Related Work} \label{sec:related_work}
% ------------------------------------------------------------------------------
% In the classical approach of conducting machine learning, the data was
% transmitted over the network to a server, where an ML model is trained in a
% centralized manner.
% Despite its simplicity, centralized learning has several limitations, such as
% storage and compute requirements, violation of data privacy and large training
% times.
% In recent years, several distributed training paradigms, such as Federated
% Learning, Split Learning, etc, have been proposed to circumvent data sharing and
% leverage the computational power of several clients in training a single model.

In this section, we provide a quick overview on several closely related areas,
namely FL, SL, and FSL, thus putting our work in comparative perspectives.

{\bf 1) Federated Learning:}
Federated learning was first proposed in \citet{McMahan:16:CoRR} and
theoretically analyzed in \citet{Yu:19:AAAI}, as an alternative to centralized
learning, whereby several clients could collaborate on training a single ML
model under the supervision of a server without the need for sharing client data.
In FL, the model parameters are transmitted over the network to the client
devices, where the clients can train the model on their local datasets before
transmitting them back to the server where the models get aggregated.
This process is repeated iteratively until the model converges.
Federated learning enjoys the benefit that the model needs to be transmitted
only once per communication round, and the clients can train their local models
in parallel, for several iterations per round.
Several FL methods
\cite{Yang:21:ICLR,Karimireddy:19:CoRR,Sahu:18:CoRR,Reddi:20:ICLR,Li:23:NIPS}
have been analyzed in the recent years, most of which are variants of the
original FedAvg algorithm~\cite{McMahan:16:CoRR}.
%\citet{Li:23:NIPS} analyze the convergence of \emph{sequential FL},
%which updates the client models sequentially in an FL setup.
As mentioned earlier, one of the key limitations of conventional FedAvg-type
algorithms is that they assume each client has sufficient memory capacity to
store the entire model to perform local gradient-based updates.
However, this assumption becomes increasingly problematic as machine learning
models continue to grow in size.

% ------------------------------------------------------------------------------
{\bf 2) Split Learning:}
%This is often not feasible, especially for deep neural networks like
%Foundation models \cite{Devlin:19:BERT,Radford:21:CLIP,Ramesh:22:DALLE2}, LLM's
%\cite{OpenAI:24:GPT4,Touvron:23:LLaMA}, etc.
To alleviate the memory constraint in training large models with FL, an
alternative strategy was proposed in \citet{Vepakomma:18:preprint}, where the
idea is to split a large neural network model into two parts; the smaller
part that contains the input layer is trained on the clients and the remaining
larger portion is trained on the server.
The layer at which the model is split is often called the \emph{cut-layer}.
During training, the cut-layer features for each mini-batch of data are
communicated from the clients to the server, and the gradients of the loss
with respect to (w.r.t.) the cut-layer features are in-turn relayed from the
server back to the clients.
Moreover, the server needs to sequentially process the server-side portion of
the model for each client as discussed in \citet{Gupta:18:JNCA}, which renders
high latency in using the vanilla SL approach.
It is thus clear that SL reduces client memory requirements at the expense of
slow and highly sequential training and increased communication costs.
CPSL \cite{Wu:23:IEEE} is an extension of traditional SL, where clients
are divided into clusters and clients within each cluster train in parallel.
However, CPSL still suffers the same latencies as in SL.

% ------------------------------------------------------------------------------
{\bf 3) Federated Split Learning:}
Since FL and SL each have inherent limitations, an important research is to
explore whether we can design a new algorithmic framework that combines the
data-parallel training efficiency of FL with the model-splitting capability of
SL, thus enabling both faster training and reduced computational demands on
clients.
Toward this end, the SplitFed algorithm with two variants was proposed in
\citet{Thapa:22:AAAI}, namely SplitFedv1 and SplitFedv2.
SplitFedv1 maintains multiple copies of the server-side model at the server, one
corresponding to each client, thereby allowing all clients to train in parallel,
but at the expense of the non-scalable computational demand on the server.
By contrast, in SplitFedv2, the server performs each client's
backpropagation sequentially on a first-come-first-serve basis.
The key advantage to both methods is that the clients can perform forward
propagation in parallel.
%train their models in parallel.
AdaptSFL \cite{Lin:24:preprint} is a recent extension of SplitFedv2, where
the authors adapted the client-side model size based on the available resources
making SplitFed more flexible to resource-constrained clients.
However, the communication overhead of these methods is the same as SL.

{\bf 4) Auxiliary Models:}
The most related works to our paper are \citet{Han:21:ICML} and
\citet{Mu:23:ICC}, where algorithms using \emph{local cost functions} were
proposed to avoid the need for the clients to wait for the server to return the
cut-layer gradients.
The client-side models are updated in parallel by backpropagating gradients via
the local loss function through auxiliary models on the corresponding clients.
The client-side and auxiliary models are then aggregated every round.
The server-side model is updated by receiving cut-layer activations once in
several local iterations.
The method in \citet{Han:21:ICML} is similar to SplitFedv1 in the sense that
separate server-side models are maintained per client, while the method
in \citet{Mu:23:ICC} is similar to SplitFedv2, where only one server-side model
is updated on a first-come-first-serve basis.
However, our work differs from these existing methods in the following key
aspect: the auxiliary models at the clients in our proposed FSL-SAGE are
periodically aligned with the server-side model to ensure high training accuracy
with low communication costs.
In contrast, the auxiliary models in the aforementioned methods do not perform
any alignments with the server-side model, thus leading to potentially low
training accuracy.
Notably, we circumvent all limitations of the above methods.
Our method minimizes computational burden on the clients and communication costs
that are inevitable in the other algorithms, and reduces latencies by
facilitating parallel training, without compromising accuracy.
\citet{Oh:22:ACM} addressed the inherent limitations of using local losses and the dual frequency of updates in FSL frameworks, and proposed to locally
regularize the client-side models to maximize the mutual information between raw and cut-layer data, while also augmenting the smashed data so that the
server-side model to improve latency and accuracy in parallel SL.

Group Knowledge Transfer (GKT) \cite{He:20:NIPS} is another approach related to
our work.
The basic idea of GKT is to train smaller client-side auxiliary models and then
transfer their knowledge to the server via a variant of knowledge distillation
\cite{Hinton:15:preprint} by using the logits from the auxiliary model to train
the server-side model.
GKT is classified as a knowledge distillation approach with {\em modified} loss
functions to incorporate server feedback during client-side model training, and is
not exactly a federated split learning method.
Thus, its performance is not directly comparable to FSL methods due to its
different training objective function.

Although auxiliary models have been used in centralized deep network training
\citet{Marquez:18:IEEE,Bhatti:22:preprint}, the training of the auxiliary models in such works is indirect, and can be interpreted as training the network in parts.
In our work, we explicitly train the auxiliary models to approximate the
server-side models, thus ensuring their most accurate use.
\section{The Proposed FSL-SAGE Algorithm} \label{sec:fsl_sage}
% ------------------------------------------------------------------------------
% main FSL-SAGE figure
\begin{figure}[t!]
  \centering
  \includegraphics[width=\columnwidth]{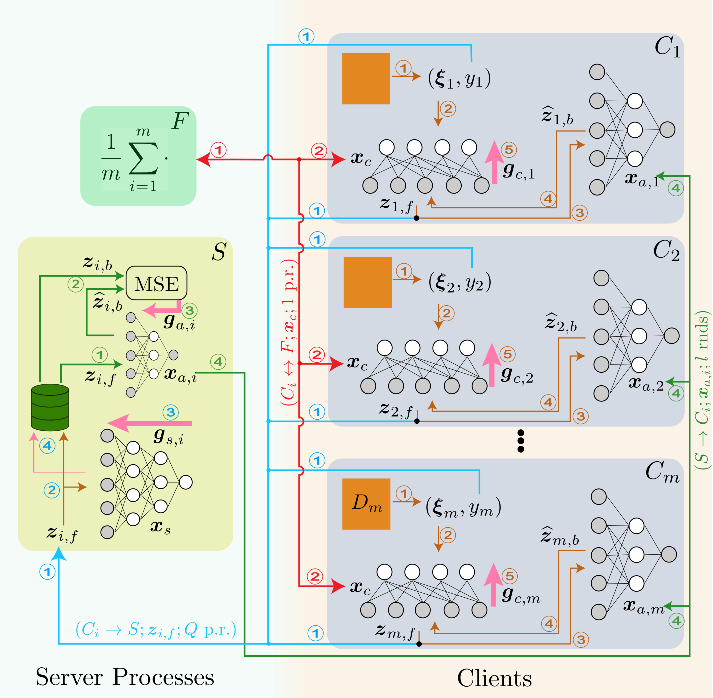}
  %\resizebox{\textwidth}{!}{
  %  \import{fig/}{fsl_block_diagram_noleg.pdf_tex}
  %}
  \caption{\label{fig:schematic}Schematic diagram of the FSL-SAGE algorithm.
  Text on the arrows indicates (sender $\to$ receiver, message, rate), e.g., the
  red arrows transmit $\vec{x}_c$ between $C_i$ and the $F$-server at the rate
  of 1 per round (p.r.), the updated $\vec{x}_{a,i}$ is sent to $C_i$ once in $l$
  rounds. 
  Arrows are color coded: \textcolor{localop}{brown} arrows
  represent local operations, \textcolor{smasheddata}{blue} arrows represent
  the smashed data and label transmission to the server,
  \textcolor{fedavging}{red} arrows indicate federated averaging, and
  \textcolor{auxmodel}{green} arrows indicate auxiliary model transfer;
  circled numbers indicate the order of operations}
  \vspace{-0.5cm}
\end{figure}

% ------------------------------------------------------------------------------
In this section, we present our system settings and the proposed FSL-SAGE
algorithm.

{\bf 1) System Setting:}
As shown in Fig.~\ref{fig:schematic}, our federated system consists of a server and $m$ clients indexed by $i$.
Each client $i$ is associated with a local dataset
$D_i$.
The server contains two server processes: (i) the ``$F$-server'' for conducting federated aggregation, and (ii) the ``$S$-server'' that processes the server-side model and updates the auxiliary models for each client.
In this paper, we call one cycle of federated aggregation as one ``round'' and assume that FSL-SAGE runs for a total of $T$ rounds.
%Fig.~\ref{fig:schematic} illustrates how the clients and server processes interact in the FSL-SAGE algorithm.
%Let $D_i$ denote the data distribution of client $i$. 
Our objective is to learn a model $\vec{x} \in \Real^d$ that minimizes a
global loss function $f:\Real^d \to \Real$:
\begin{equation}
  \min_{\vec{x} \in \Real^d} f(\vec{x}) := \frac{1}{m} \sum_{i=1}^m F_i(\vec{x}),
  \label{eq:mainobj}
\end{equation}
where $F_i(\cdot):= \mathbb{E}_{\vec{\xi}_i \sim D_i}[F_i(\vec{x}; \vec{\xi}_i)]$ corresponds to the objective function of the $i$-th client.

{\bf 2) The FSL-SAGE Algorithm:}
The entire process of FSL-SAGE is illustrated in Fig.~\ref{fig:schematic}.
Similar to split learning, in FSL-SAGE, we split the model $\vec{x}$ into the client-side model
$\vec{x}_{c,i}$ for all clients $i \in [m]$ and the server-side model $\vec{x}_s$.
As shown in Fig.~\ref{fig:schematic}, a single local iteration at the $i$-th client consists of the following key steps:
(1) The client-side model takes as input a sampled batch of data $\vec{\xi}_i \in D_i$
and outputs the cut-layer features $\vec{z}_{i,f}(\vec{x}_{c,i}; \vec{\xi}_i)$,
where the subscript $f$ in $\vec{z}_{i,f}$ denotes the output of the ``forward''
pass;
(2) Each client has an auxiliary model $\vec{x}_{a,i}$, which takes as
input $\vec{z}_{i,f}$ and computes the cost function estimate
$\hat{F}(\vec{x}_{a,i}; \vec{z}_{i,f})$;
(3) In the backward pass, the auxiliary model computes the gradients of the cost
estimate with respect to the cut layer features, denoted as
$\vec{\hat{z}}_{i,b}(\vec{x}_{a,i};\vec{z}_{i,f},y_i)$, where $y_i$ is the label
corresponding to $\xi_i$, and the subscript $b$ in $\vec{\hat{z}}_{i,b}$ denotes
the result of the `backward' pass.
These gradients are in turn used to update the client-side model;
(4) Every client performs $K$ local iterations before sending their models,
$\vec{x}_{c,i}$, to the $F$-server for aggregation.
Algorithms \ref{alg:client} and \ref{alg:serverf} summarize the client and
$F$-server operations in FSL-SAGE.

\begin{algorithm}[t!] \caption{\label{alg:client}The Code of Client $i$ in FSL-SAGE.}
  \begin{algorithmic}[1]
    \Require{$\vec{x}_{a,i}^0$}
    \For{$t \gets 0, 1, \dots, (T-1)$}
      \State $\vec{x}_{c,i}^{t,0} \gets \vec{x}_c^{t}$
          \Comment{Initialize weights from $F$-server}
      \If{$t \equiv 0\mod l$}
        \Receive{$\vec{x}_{a,i}^t$ from $S$-server}
      \Else
        \State $\vec{x}_{a,i}^t \gets \vec{x}_{a,i}^{t-1}$
      \EndIf
      \For{$k \gets 1, 2, \dots, (K-1)$}
        \State $(\vec{\xi}_i^{t,k}, y_i^{t,k}) \sim D_i$
            \Comment{Sample local mini-batch}
        \State $\vec{z}_{i,f}^{t,k} \gets \vec{z}_{i,f}(\vec{x}_{c,i}^{t,k};
              \vec{\xi}_i^{t,k})$
            \Comment{Forward pass on client}
        %\State Get BP estimate: $\vec{\hat{z}}_{i,b}^{t,0} \gets
        %    \vec{\hat{z}}_{i,b}(\vec{x}_{a,i}^{t-1}; \vec{z}_{i,f}^{t,0})$
        \If{$k \equiv 0 \mod (K/Q)$}
          \Send{$(\vec{z}_{i,f}^{t,k}, y_i^{t,k})$ to $S$-server}
        \EndIf
        \State $\vec{\hat{z}}_{i,b}^{t,k} \gets
              \vec{\hat{z}}_{i,b}(\vec{x}_{a,i}^{t}; \vec{z}_{i,f}^{t,k})$
            \Comment{Compute grad estimate}
        \State $\vec{x}_{c,i}^{t,k+1} \gets \vec{x}_{c,i}^{t,k} - \eta_L
              \Jac_{c,i}^{t,k} \vec{\hat{z}}_{i,b}^{t,k}$
            \Comment{Update client}
      \EndFor
      \Send{$x_{c,i}^{t,K}$ to $F$-server}
    \EndFor
  \end{algorithmic}
\end{algorithm}

\begin{algorithm}[t!] \caption{\label{alg:serverf}FSL-SAGE $F$-Server}
  \begin{algorithmic}[1]
    \State Initialize model $\vec{x}^0$
    \State $(\vec{x}_c^0, \vec{x}_s^0) \gets$ $\splitmodel{\vec{x}^0}$
        \Comment{Split model}
    \Send{$\vec{x}_s^0$ to Server $S$}
    \Broadcast{$\vec{x}_c^0$ to all clients}
    \For{$t \gets 0, 1, \dots, T-1$}
      \Receive{$\vec{x}_{c,i}^{t,K}$ from clients $i = 1,2,\dots, m$}
      \State $\vec{x}_c^{t+1} = \frac{1}{m} \sum_{i=1}^m \vec{x}_{c,i}^{t,K}$
          \Comment{Aggregate client model}
      \Broadcast{$\vec{x}_c^{t+1}$ to all clients}
    \EndFor
  \end{algorithmic}
\end{algorithm}

To update the server-side model, the clients periodically send the computed cut-layer features and labels to the $S$-server $Q$ times per round.
The $S$-server computes the true loss function $F_i(\vec{x}_s; \vec{z}_{i,f},
y_i)$ and the gradient of $F_i$ with respect to $\vec{z}_{i,f}$, which is denoted by
$\vec{z}_{i,b}$.
The server-side model is then updated in a gradient descent fashion. 
The tuple $(\vec{z}_{i,f}, y_i)$, called the \emph{alignment dataset}, is stored for
training the auxiliary model.
\algref{servers} summarizes the operations of the $S$-server.

\begin{algorithm}[t!] \caption{\label{alg:servers}(Lazy) FSL-SAGE $S$-Server}
  \begin{algorithmic}[1]
    \Receive{$\vec{x}_s^0$ from $F$-server}
    \For{$t \in [T]$}
      \For{$k \in [K]$ \textbf{if} $k \equiv 0 \bmod K/Q$}
          \Receive{$(\vec{z}_{i,f}^{t,k}, y_i)~\forall i$ on FCFS basis}
          \State \Compute{grad $\vec{g}_s(\vec{x}_s^{t,k}; \vec{z}_{i,f}^{t,k}, y_i)$ via backprop}
          \State $\vec{x}_{s}^{t,k+1} := \vec{x}_{s}^{t,k} - \eta \vec{g}_{s}^{t,k}$ \Comment{Update server model}
          \State \Store{for client $i$: $(\vec{z}_{i,f}^j, y_i^j)$}
          %\State $\vec{z}_{i,b}^t \gets \frac{\partial
          %      F_i(\vec{x}_s^{t-l}; \vec{z}_{i,f}, y_i)}{\partial
          %      \vec{z}_{i,f}}\Big\vert_{\vec{z}_{i,f}^{t,0}}$
          %\Send{$\vec{z}_{i,b}^t$ to client $i$}
      \EndFor
      \If{$t \equiv 0 \mod l$ \And $t \leq T'$} \Comment{$T' \leq T$}
        \For{client $i \gets 1, 2, \dots, m$}
          \State \Compute{$\vec{z}_{i,b}(\vec{z}_{i,f}^j, y_i^j) ~\forall j$ from}
          \State $\vec{x}_{a,i}^t = \alignment{%
            \left\{\vec{z}_{i,f}^j, \vec{z}_{i,b}^j, y_i^j\right\}_{j}%
          }$
          %\State {\small$\vec{x}_{a,i}^t = \argmin_{\vec{x}_a} \frac{1}{2} \sum_{j=1}^n
          %      \norm{\vec{\hat{z}}_{i,b}^{jl,0}(\vec{x}_a) -
          %      \vec{z}_{i,b}^{jl,0}\left( \vec{x}_s^t\right)}^2$}
          %    \Comment{Update auxiliary model}
          \Send{$\vec{x}_{a,i}^t$ to client $i$}
        \EndFor
      \EndIf
      %\State $f(\vec{x}_s^{t,k}) = \frac{1}{m} \sum_{i=1}^m
      %    F_i(\vec{x}_s^{t-l}; \vec{z}_{i,f}^{t,k}, y_i)$
      %  %\Comment{Global loss}
      %\State $\vec{x}_s^t = \vec{x}_s^{t-l} - \eta \nabla
      %    f(\vec{x}_s^{t-l})$
      %  
    \EndFor
  \end{algorithmic}
\end{algorithm}

Once every $l$ rounds, the server initiates an \emph{alignment process}, where each
client's auxiliary model is updated using the respective alignment dataset.
Assuming round $t$ is a multiple of $l$ the auxiliary model at round $t$, is
updated as the following:
\begin{equation}
  \vec{x}_{a,i}^{t} = \argmin_{\vec{x}_a} \frac{1}{2}
            \sum_{j=1}^{Qt} \norm{\vec{\hat{z}}_{i,b}(
              \vec{x}_a;\vec{z}_{i,f}^j, y_i^j)
      - \vec{z}_{i,b}^j}_2^2,
  \label{eq:auxupdate}
\end{equation}
where the backward gradients $\vec{z}_{i,b}^j$ are computed by passing
$\vec{z}_{i,f}^j$ and $y_i^j$ through the server-side model.
In practice, the minimization in \eqref{auxupdate} could be approximately done by using
a finite number of steps of gradient descent, Adam \cite{Kingma:14:ICLR}, or other optimizers.
Also, since the alignment dataset grows larger in size with the number of
rounds increases, one can discard the older entries in order to maintain a fixed maximum size of the dataset.
For simplicity in theoretical analysis, we assume that the server has sufficiently large storage to accommodate the growing alignment dataset size.

In \algref{client}, if the auxiliary models are only updated for the first $T'<T$
rounds of communication and then are frozen until the end of training, we will refer to this version as the Lazy FSL-SAGE algorithm.
Note that, in practice, choosing a smaller $T'$ can further reduce communication costs at the expense of final accuracy of the learned model.
On the other hand, the FSL-SAGE algorithm is non-lazy if $T'=T$.
%Although, we have not used this \emph{lazy} version of FSL in our simulations,
In \secref{analysis}, we will characterize the stationarity gaps of both lazy and non-lazy versions of FSL-SAGE.

% ------------------------------------------------------------------------------

% ------------------------------------------------------------------------------
\section{Theoretical Convergence Analysis} \label{sec:analysis}

% assumptions and explanation
% ------------------------------------------------------------------------------
For convenient reference, we summarize the key notation used in the following analysis in
\appref{extra_notation} and \tabref{notation}.
We begin by making the following standard assumptions widely adopted in the FL literature
\cite{Karimireddy:19:CoRR,Reddi:20:ICLR,Yang:21:ICLR}.
\begin{assumption}[Smoothness] \label{asm:smooth}
    For all $i=1,2,\dots m$, the following objective functions corresponding to
    the $i$-th client are smooth, i.e., for all $\vec{x}_s, \vec{x}_{a,i},
    \xi_i$ and $y_i$:
    \begin{enumerate}[(a)]
        \item \label{itm:gci}The gradient of $\vec{g}_{i} := \nabla
            F_i(\vec{x}_c,\vec{x}_s; \xi_i, y_i)$ is $L_c$-Lipschitz with respect to
            $\vec{x} := (\vec{x}_c, \vec{x}_s)$, i.e., there exists a constant $L_c >0$ such that $\norm{\vec{g}_{i}(\vec{u}_c, \vec{u}_s; \xi_i, y_i) - \vec{g}_{i}(\vec{v}_c, \vec{v}_s; \xi_i, y_i)} \leq L_c \norm{\vec{u} - \vec{v}}$, for all $\vec{u} := (\vec{u}_c, \vec{u}_s)$ and $\vec{v} :=
            (\vec{v}_c, \vec{v}_s) \in \mathcal{X}$.
        
        \item  \label{itm:ghatci}The gradient of the local client objective
            $\hat{F}_i$, $\vec{\hat{g}}_{c,i} := \nabla_c
            \hat{F}_i(\vec{x}_c,\vec{x}_{a,i}; \xi_i, y_i)$ is
            $\hat{L}_c$-Lipschitz with respect to $\vec{x}_c$: there exists a constant $\hat{L}_c > 0$ such that $\norm{\vec{\hat{g}}_{c,i}(\vec{u}_c, \vec{x}_{a,i}; \xi_i, y_i) - \vec{\hat{g}}_{c,i}(\vec{v}_c, \vec{x}_{a,i}; \xi_i, y_i)} \leq \hat{L}_c \norm{\vec{u}_c - \vec{v}_c}$
            for all $\vec{u}_c, \vec{v}_c \in \mathcal{X}_c$.
    \end{enumerate}
\end{assumption}

%\begin{remark} \label{rem:fsmooth}
%    \asmref{smooth}\ref{itm:gci} implies that $f(x)$ satisfies the following for
%    all $\vec{u}, \vec{v} \in \mathcal{X}$:
%    \begin{align}
%        \MoveEqLeft f(\vec{u}) = \frac{1}{m} \sum_{i=1}^m F_i(\vec{u})
%        \nonumber \\
%            &\leq \frac{1}{m} \sum_{i=1}^m \left[F_i(\vec{v})
%                + \iprod{\nabla F_i(\vec{v})}{\vec{u} - \vec{v}}
%                + \frac{L_c}{2} \norm{\vec{u} - \vec{v}}^2 \right]
%        \label{eq:remf} \\
%            &= f(\vec{v}) + \iprod{\nabla f(\vec{v})}{\vec{u} - \vec{v}} +
%                \frac{L_c}{2} \norm{\vec{v} - \vec{u}}^2
%    \end{align}
%\end{remark}
% ------------------------------------------------------------------------------
\begin{assumption}[Bounded Variance] \label{asm:varbound}
    The following quantities measuring variability of the local stochastic
    estimates of the objective function and the variability of the client loss
    function are bounded: \begin{enumerate}[(a)]
        \item \label{itm:clocal}The local gradient estimate for a mini-batch
            sampled at client $i$ is unbiased for all $i$, i.e.,
            $\forall~\vec{\tilde{x}},~\Expsub{\xi_i \sim \mathcal{D}_i}{\nabla
            \hat{F}_i(\vec{\tilde{x}}; \xi_i)} = \nabla
            \hat{F}_i(\vec{\tilde{x}}_i)$
            and has a bounded variance, i.e., $\exists~\sigma_L>0$ such that
            $\Expsub{\xi_i \sim \mathcal{D}_i}{\norm{\nabla
            \hat{F}_i(\vec{\tilde{x}}; \xi_i) - \nabla
            \hat{F}_i(\vec{\tilde{x}})}^2} \leq \hat{\sigma}_L^2$;
        \item \label{itm:slocal}The local gradient for client $i$ is unbiased for
            all $i$, i.e.,
            $\forall~\vec{\tilde{x}},~\Expsub{\xi_i \sim \mathcal{D}_i}{\nabla
            F_i(\vec{x}; \xi_i)} = \nabla F_i(\vec{x}_i)$
            and has a bounded variance, i.e., $\exists~\sigma_L>0$ such that
            $\Expsub{\xi_i \sim \mathcal{D}_i}{\norm{\nabla
            F_i(\vec{x}; \xi_i) - \nabla F_i(\vec{x})}^2} \leq \sigma_L^2$
        
        \item \label{itm:global}The global variability of the local client
            gradient is bounded, i.e., for all $\vec{x}$ and $i \in [m]$,
            $\norm{\nabla F_i(\vec{x}) - \nabla f(\vec{x})}^2 \leq \sigma_G^2$.
    \end{enumerate}
\end{assumption}

% ------------------------------------------------------------------------------
%In FSL, the client uses an estimate of the server-returned gradients, obtained
%via the auxiliary model.
%\begin{assumption}[Auxiliary Model] \label{asm:auxil}
%    The auxiliary model of the $\nth{i}$ client $\vec{x}_{a,i} satisfies:
%    \begin{enumerate}[(a)]
%        \item \label{itm:pac}
%        \item \label{itm:witness}
%    \end{enumerate}
%\end{assumption}

% ------------------------------------------------------------------------------
%\note{Explain all the assumptions, tying them to past literature and
%detailing their implications here...}
Assumptions \ref{asm:smooth} and \ref{asm:varbound} 
%ensure that the non-convex objective functions are well behaved, and 
are commonly used in
analyzing convergence of many optimization algorithms from the classical SGD
\cite{Ghadimi:13:SIAM} to many variants of federated learning
\cite{Yu:19:AAAI,Yang:21:ICLR}.
The variance terms $\hat{\sigma}_L^2$, $\sigma_L^2$ and $\sigma_G^2$ in
\asmref{varbound} characterize the degree of intra-client and inter-client data
heterogeneity.

% ------------------------------------------------------------------------------

% ------------------------------------------------------------------------------
\subsection{General Finite-Time Convergence Rate Results of Non-lazy FSL-SAGE}

% ------------------------------------------------------------------------------
In what follows, we provide an upper bound on the \emph{stationarity gap} of
the FSL algorithm, i.e., a bound on the expected squared norm of the gradient of
the model parameters.

% ------------------------------------------------------------------------------
% Non-lazy FSL convergence theorem
\begin{theorem}[Convergence of Non-Lazy FSL-SAGE] \label{thm:convgnonlazy}
    Under Assumptions \ref{asm:smooth} and \ref{asm:varbound}, given step-sizes
    $\eta_L$ and $\eta$ satisfying $\eta \leq \frac{1}{4\sqrt{2} Q L_c}$ and
    $\eta_L \leq \frac{1}{2\sqrt{10} K \hat{L}_c}$, FSL-SAGE, as described in
    Algorithms \ref{alg:client}-\ref{alg:servers}, has the following finite-time stationarity convergence rate bound:
    \begin{align}
    \MoveEqLeft\min_{t \in [T]} \Exp{\norm{\nabla f(\vec{x}^t)}^2}
        \leq \frac{f(\vec{x}^0) - f^*}{c \min{\{\eta_L, m \eta\}} Q T}
        \nonumber \\
        &+ \frac{3 K \eta_L}{2 c Q \min\{\eta_L, m \eta\}} \frac{1}{T}
            \sum_{t=1}^T \varepsilon^t
        + \frac{\Phi(\eta_L, \eta)}{T},
    \label{eq:convgfinal}
    \end{align}
    where
    \begin{align}
    %\vspace{-.5in}
        &\Phi(\eta_L, \eta)
            := \frac{c' K}{2 c^2 m Q \min\{\eta_L, m \eta\}} . 
            \bigl[ \eta_L^2 \hat{\sigma}_L^2 L_c
        \nonumber \\
            &+ 40 m K \eta_L^3 \hat{L}_c^2 (\hat{\sigma}_L^2 + K \sigma_G^2)
        \nonumber \\
            &+ m^2 \eta^2 \sigma_L^2 L_c
            + 32 Q m^2 \eta^3 L_c^2 (\sigma_L^2 + Q \sigma_G^2) 
            \bigr].
    \end{align}
    and $\varepsilon^t$ is the gradient estimation error incurred by the
    auxiliary model averaged over clients $i \in [m]$:
    \begin{equation}
        \varepsilon^t := \frac{1}{m} \sum_{i=1}^m \norm{
            \nabla \hat{F}_i(\vec{\tilde{x}}^t) - \nabla F_i(\vec{x}^t)
        }^2.
        \label{eq:varepsilon}
    \end{equation}
\end{theorem}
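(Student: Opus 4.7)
The plan is to follow a descent-lemma argument adapted to the multi-timescale structure of FSL-SAGE. I begin with the $L_c$-smoothness of $f$ (which follows from \asmref{smooth}\eqref{itm:gci}) to write, for each round $t$,
\begin{equation*}
f(\vec{x}^{t+1}) \leq f(\vec{x}^t) + \langle \nabla f(\vec{x}^t), \vec{x}^{t+1} - \vec{x}^t\rangle + \tfrac{L_c}{2} \norm{\vec{x}^{t+1} - \vec{x}^t}^2.
\end{equation*}
I then split the update into its two blocks: $\vec{x}_c^{t+1} - \vec{x}_c^t$ equals $-\eta_L$ times a sum over $K$ local steps of the client-average of auxiliary-driven stochastic gradients $\hat{\vec{g}}_{c,i}^{t,k}$, while $\vec{x}_s^{t+1} - \vec{x}_s^t$ equals $-\eta$ times a sum of $Q$ per-client server updates driven by the true cut-layer gradients. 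The server block can be controlled by standard FedAvg-style arguments because the $S$-server uses unbiased stochastic gradients of $F_i$ on the smashed data; this produces the $\eta^2 \sigma_L^2 L_c$ and $\eta^3 L_c^2$ terms of $\Phi$ after a multi-step-SGD drift bound over the $Q$ inner iterations.

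The novelty sits on the client block. I would add and subtract $\nabla_c F_i(\vec{x}^t)$ inside $\hat{\vec{g}}_{c,i}^{t,k}$ to obtain the four-way decomposition
\begin{equation*}
\hat{\vec{g}}_{c,i}^{t,k} = \nabla_c F_i(\vec{x}^t) + \bigl(\nabla \hat{F}_i(\vec{\tilde{x}}^t) - \nabla F_i(\vec{x}^t)\bigr) + (\text{client drift in } k) + (\text{mini-batch noise}).
\end{equation*}
Paired with $\nabla f(\vec{x}^t)$ in the cross term of the descent inequality, the first piece yields the descent contribution $-\Theta(K\eta_L)\norm{\nabla f(\vec{x}^t)}^2$. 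The second piece, after a Young's-inequality split and averaging over $i$, becomes precisely the $\varepsilon^t$ term in the bound. The third is handled by a drift lemma bounding $\Exp{\norm{\vec{x}_{c,i}^{t,k} - \vec{x}_c^t}^2}$ across the $K$ client steps using $\hat{L}_c$-smoothness (\asmref{smooth}\eqref{itm:ghatci}) together with \asmref{varbound}\eqref{itm:clocal} and \eqref{itm:global}. The fourth contributes the $\eta_L^2 \hat{\sigma}_L^2 L_c$ and $\eta_L^3 \hat{L}_c^2$ pieces of $\Phi$.

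The step-size conditions $\eta_L \leq 1/(2\sqrt{10} K \hat{L}_c)$ and $\eta \leq 1/(4\sqrt{2} Q L_c)$ are precisely what is required so that, after plugging the two drift bounds back into the descent inequality, a positive constant fraction $c\min\{\eta_L, m\eta\} Q$ of the descent term $-\norm{\nabla f(\vec{x}^t)}^2$ survives on the right-hand side (the leftover higher-order contributions being absorbed into $c'\Phi$). Telescoping over $t = 0,\dots,T-1$, using $f(\vec{x}^T) \geq f^*$, dividing by $cT\min\{\eta_L, m\eta\}Q$, and upgrading the time-average to a minimum over $t$ then gives \eqref{convgfinal}.

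The main obstacle I expect is the coupling between the auxiliary bias and the client drift: the natural drift bound $\Exp{\norm{\vec{x}_{c,i}^{t,k} - \vec{x}_c^t}^2}$ involves $\Exp{\norm{\hat{\vec{g}}_{c,i}}^2}$, which already carries the $\varepsilon^t$ term. A naive bookkeeping would inflate $\varepsilon^t$ by an extra factor of $K$ (or even $K^2$) in the final rate; avoiding this requires a Young's-inequality split tuned so that $\varepsilon^t$ appears only once, with exactly the coefficient $3K\eta_L/(2cQ\min\{\eta_L, m\eta\})$ claimed in the statement. Simultaneously tracking the three distinct inner timescales $K$, $Q$, and the federated averaging across $m$ clients is the other delicate accounting step, but it becomes mechanical once the gradient decomposition above and the two drift lemmas (one for the client, one for the server) are established.
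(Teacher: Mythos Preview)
Your proposal is correct and follows essentially the same route as the paper: descent lemma split into client and server blocks, a drift lemma over the $K$ local steps that itself carries the auxiliary bias term, and telescoping under the stated step-size conditions so that a positive multiple $c$ of the descent survives. The only cosmetic difference is that the paper handles the client cross term via the polarization identity $\langle a,b\rangle = \tfrac{1}{2}\|a\|^2 + \tfrac{1}{2}\|b\|^2 - \tfrac{1}{2}\|a-b\|^2$ rather than a Young split, which produces an exact negative term cancelling the second-order $B$ contribution and is what pins down the factor $3/2$ in front of $\varepsilon^t$ (namely $1 + 20K^2\eta_L^2\hat{L}_c^2 \le 3/2$ under the step-size bound).
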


% ------------------------------------------------------------------------------
A full proof of \thmref{convgnonlazy} is provided in
\appref{convgnonlazyproof}.  
The following result directly follows from \thmref{convgnonlazy}.
\begin{corollary}
For the step-size choices $\eta = \bigO{\frac{1}{m
\sqrt{T}}}$ and $\eta_L = \bigO{\frac{1}{\sqrt{T}}}$, Algorithms \ref{alg:client}-\ref{alg:servers} achieves the following finite-time stationarity convergence rate:
\begin{equation}
    \min_{t \in [T]} \Exp{\norm{\nabla f(\vec{x}^t)}^2}
        = \bigO{\frac{1}{\sqrt{T}}}
        + \frac{\bigO{1}}{T} \sum_{t=1}^T \varepsilon^t.
    \label{eq:convgasymp}
\end{equation}
\end{corollary}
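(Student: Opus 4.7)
The corollary follows from Theorem~\ref{thm:convgnonlazy} by direct substitution of the prescribed step-sizes $\eta = \Theta(1/(m\sqrt{T}))$ and $\eta_L = \Theta(1/\sqrt{T})$ into \eqref{eq:convgfinal} and tracking the asymptotic order of each term. The plan is to first verify that, for $T$ sufficiently large, these choices automatically satisfy the step-size constraints $\eta \le 1/(4\sqrt{2}QL_c)$ and $\eta_L \le 1/(2\sqrt{10}K\hat{L}_c)$ required by Theorem~\ref{thm:convgnonlazy}, so that the bound applies verbatim. The governing quantity throughout is $m\eta = \Theta(1/\sqrt{T})$, which forces $\min\{\eta_L, m\eta\} = \Theta(1/\sqrt{T})$ and appears in every denominator on the right-hand side of \eqref{eq:convgfinal}.

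Next I would dispose of the three terms in \eqref{eq:convgfinal} separately. The suboptimality term $(f(\vec{x}^0)-f^\ast)/(c\min\{\eta_L, m\eta\}QT)$ is $\Theta(\sqrt{T}/T) = \Theta(1/\sqrt{T})$, which sets the leading rate. For the auxiliary-error term, the prefactor $3K\eta_L/(2cQ\min\{\eta_L, m\eta\})$ simplifies to $\Theta(\eta_L/\min\{\eta_L, m\eta\}) = \Theta(1)$, which produces exactly the $(\mathcal{O}(1)/T)\sum_{t=1}^T \varepsilon^t$ piece appearing in \eqref{eq:convgasymp}.

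The only remaining work is to verify that $\Phi(\eta_L,\eta)/T$ does not exceed $\mathcal{O}(1/\sqrt{T})$. Each summand inside the bracket defining $\Phi$ is a monomial in $\eta_L$ or $\eta$ of degree at least two; substituting the scaling gives $\eta_L^2 = \Theta(1/T)$, $\eta_L^3 = \Theta(1/T^{3/2})$, $m^2\eta^2 = \Theta(1/T)$, and $Qm^2\eta^3 = \Theta(1/(mT^{3/2}))$. After multiplying by the outer coefficient $1/(m\min\{\eta_L, m\eta\}) = \Theta(\sqrt{T}/m)$ and then dividing by the additional $T$, every contribution is at most $\mathcal{O}(1/T^{3/2})$, which is strictly dominated by the $1/\sqrt{T}$ rate coming from the suboptimality term; adding the three pieces yields \eqref{eq:convgasymp}. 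The derivation is essentially bookkeeping, so I do not expect a genuine obstacle; the only mild subtlety is to confirm that the cubic $\eta^3$- and $\eta_L^3$-terms in $\Phi$ remain lower order than $1/\sqrt{T}$, which follows automatically from the smallness conditions on the step-sizes, and no new analytic machinery beyond Theorem~\ref{thm:convgnonlazy} is needed.
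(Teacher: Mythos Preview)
Your proposal is correct and matches the paper's approach: the paper simply states that the corollary ``directly follows from Theorem~\ref{thm:convgnonlazy},'' and your term-by-term substitution of $\eta_L=\Theta(1/\sqrt{T})$, $m\eta=\Theta(1/\sqrt{T})$ into \eqref{eq:convgfinal} is exactly the intended bookkeeping. The only trivial omission is that the $\eta_L^3$ summand in $\Phi$ carries an explicit factor of $m$, but this does not change the order after multiplying by $\sqrt{T}/m$ and dividing by $T$.
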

The last term in \eqref{convgasymp} corresponds to the round-average of the auxiliary models' estimation error in estimating the cut-layer gradients
returned by the server, which plays an important role in the convergence of our method. While these terms similar to terms 1 and 3 in the RHS of \eqref{convgfinal} appear in
convergence proofs of FedAvg \cite{McMahan:16:CoRR} and its variants
\cite{Yang:21:ICLR,Reddi:20:ICLR}, the estimation error term is a {\em direct
result of using our auxiliary model} to estimate the server gradients. 
We note that, although CSE-FSL \cite{Mu:23:ICC} also uses auxiliary models, their analysis only considers the client-side and server-side models separately, and thus they bypass analyzing such an important estimation error term.

% ------------------------------------------------------------------------------

% ------------------------------------------------------------------------------
\subsection{$\mathcal{O}(\frac{1}{\sqrt{T}})$ \! Convergence \! Rate \! of \! Non-lazy \! FSL-SAGE with Agnostic PAC Learnable Auxiliary Models}

% ------------------------------------------------------------------------------
With the general finite-time convergence rate results in \thmref{convgnonlazy}, it remains unclear whether one can achieve the same $\mathcal{O}(\frac{1}{\sqrt{T}})$ convergence rate as the conventional FedAvg-type methods.
In this section, we show that the answer is ``yes'' if we further impose a mild learnability assumption on the auxiliary model. 
%Let us say that the auxiliary model estimate of the cut-layer gradients, is learnable according to the following notion of
Toward this end, we first introduce the notion called in-expectation learnability \cite{Mey:22:preprint}:
%is borrowed from Definition 3.1 in \cite{ShwartzDavid:14:book}

\begin{definition}[In-Expectation Learnability] \label{def:ielearn}
The hypothesis class $\mathcal{G}$ is said to be \emph{in-expectation learnable}
by the empirical risk minimization (ERM) algorithm if and only if $\forall~\epsilon > 0$, there exists a
$r_{\mathcal{G}}(\epsilon)$ such that, for $r \geq r_{\mathcal{G}}(\epsilon)$
training samples, the following bound holds:
\begin{align}
    \MoveEqLeft\Exp{\norm{\vec{g}(\vec{\theta}_r; \vec{x}) -
        \vec{f}(\vec{x})}^2}
    \nonumber \\
    &\leq 
    \min_{\vec{\theta}} \Exp{
        \norm{\vec{g}(\vec{\theta}; \vec{x}) - \vec{f}(\vec{x})}^2
    } + \epsilon,
\end{align}
where the expectation is over $\vec{x}, \vec{x}_1, \vec{x}_2, \dots, \vec{x}_r$; and 
$\vec{\theta}_r$ is the ERM hypothesis learned from $\{\vec{x}_i\}_{i=1}^r$ that are
i.i.d sampled from $\mathcal{D}$:
\begin{equation}
    \vec{\theta}_r(\{\vec{x}_i\}_i) :=
        \argmin_{\vec{\theta} \in \Theta} \frac{1}{r} \sum_{i=1}^r 
        \norm{\vec{g}(\vec{\theta}; \vec{x}_i) - \vec{f}(\vec{x}_i)}^2.
\end{equation}
\end{definition}

Next, we introduce additional notation pertaining to the loss
function used to train the auxiliary model.  
We start by denoting the hypothesis
class of auxiliary model functions for the $\nth{i}$ client as $\mathcal{A}_i :=
\left\{\vec{\hat{z}}_{i,b}(\vec{x}_{a,i}; \cdot): \vec{x}_{a,
i} \in \mathcal{X}_a \right\}$, which is parameterized by $\vec{x}_{a,i}$. Omitting obvious parameters for readability, we
can denote the empirical loss on $r$ training samples $\{(\vec{z}_{i,f}^j,
y_i^j)\}_{j=1}^r$ as
\begin{align}
    \MoveEqLeft\hat{\mathcal{L}}_i(\vec{x}_{a,i}, \vec{x}) := \frac{1}{r}
        \sum_{j=1}^r \bigl\|
            \vec{\hat{z}}_{i,b}(\vec{x}_{a,i}) - \vec{z}_{i,b}(\vec{x})
        \bigr\|^2,
    \label{eq:emploss}
\end{align}
and the true loss as
\begin{equation}
    \mathcal{L}_i(\vec{x}_{a,i}, \vec{x})
        := \Expsubml{(\xi, y) \sim \mathcal{D}_i} {\biggl\|
            \vec{\hat{z}}_{i,b}(\vec{x}_{a,i}) - \vec{z}_{i,b}(\vec{x})
        \biggr\|^2
        }.
    \label{eq:trueloss}
\end{equation}
Recall that the auxiliary model is trained once every $l$ rounds on an
increasing number of smashed data samples $r := \floor{t/l}$.
Thus, one can define the
ERM solution at round $t$ from Algorithms
\ref{alg:client} \& \ref{alg:servers}, and the corresponding optimal solution
as follows:
\begin{align}
    \vec{x}_{a,i}^t &= \argmin_{\vec{x}_{a} \in \mathcal{X}_a}
        \hat{\mathcal{L}}_i(\vec{x}_a, \vec{x}^t)
    \label{eq:erm} \\
    \vec{x}_{a,i}^{t\star} &= \argmin_{\vec{x}_{a} \in \mathcal{X}_a}
        \mathcal{L}_i(\vec{x}_a, \vec{x}^t).
    \label{eq:opt}
\end{align}
Then, the auxiliary model \eqref{erm} is an in-expectation learner of
$\mathcal{A}_i$ if $\forall~\epsilon > 0$, there exists a $r_i(\epsilon)$ such
that, for $r \geq r_i(\epsilon)$:
\begin{equation}
    \Exp{\mathcal{L}_i(\vec{x}_{a,i}^t, \vec{x}^t)} \leq
        \mathcal{L}_i(\vec{x}_{a,i}^{t\star}, \vec{x}^t) + \epsilon
    \label{eq:pacbound}
\end{equation}
This leads us to introduce the following learnability assumption on the auxiliary model.

\begin{assumption}[Learnable Auxiliary Model] \label{asm:auxil}
    The auxiliary model of the $i$-th client $\vec{x}_{a,i}$ in
    \eqref{emploss} is an in-expectation learner of $\mathcal{A}_i$ according to
    \defref{ielearn} with sample complexity $r_i(\epsilon) =
    \bigO{1/\epsilon^2}$.
\end{assumption}

In \cite{Grunwald:21:PMLR}, in-expectation bounds under the
PAC-Bayesian formulation have been derived for loss functions satisfying the so-called $(B,
\beta)$-Bernstein condition \cite{Bartlett:06:Springer} with $B > 0, \beta \in
[0, 1]$.  The sample complexity of those bounds are given by
$\bigO{\left\{ \frac{comp}{r} \right\}^{\frac{1}{2-\beta}}}$
where $comp$ is some complexity
measure that appears in the form of KL-divergence
\cite{Audibert:04:Thesis,Zhang:06:JSTOR} or the conditional
mutual information (CMI) \cite{Steinke:20:PMLR} of the posterior and prior
distributions in PAC-Bayesian framework. Provided the $comp$ term does not
increase drastically with $r$, \asmref{auxil} can be satisfied by such models.
Lastly, we make the following Lipchitz client assumption:
% ------------------------------------------------------------------------------
\begin{assumption}[Lipschitz client] \label{asm:lipschitz}
    For all clients $i=1,2,\dots m$ and for all $\vec{u}_c, \vec{v}_c \in \mathcal{X}_c$, the client-side cut-layer activation
    $\vec{z}_{i,f}(\vec{x}_c; \vec{\xi}_i)$ is $L_{zf}$-Lipschitz with respect to
    the client parameters $\vec{x}_c$, i.e., there exists a constant $L_{zf}>0$ such that $\norm{\vec{z}_{i,f}(\vec{u}_c; \vec{\xi}_i) - \vec{z}_{i,f}(\vec{v}_c;
            \vec{\xi}_i)} \leq L_{zf} \norm{\vec{u}_c - \vec{v}_c}.$
    %the following cut-layer related quantities
    %are Lipschitz, i.e., for all $\vec{\xi}_i, y_i$ and $\vec{z}_{i,f}$:
    %\begin{enumerate}[(a)]
    %    \item \label{itm:zif}the client-side cut-layer activations
    %        $\vec{z}_{i,f}(\vec{x}_c; \vec{\xi}_i)$ is $L_{zf}$-Lipschitz w.r.t.
    %        the client parameters $\vec{x}_c$:
    %        \begin{equation}
    %            \norm{\vec{z}_{i,f}(\vec{u}_c; \vec{\xi}_i) - \vec{z}_{i,f}(\vec{v}_c;
    %                \vec{\xi}_i)} \leq L_{zf} \norm{\vec{u}_c - \vec{v}_c}
    %        \end{equation}
    %        for all $\vec{u}_c, \vec{v}_c \in \mathcal{X}_c$; and,
%
%        \item \label{itm:zib} the server-returned cut-layer gradients
%            $\vec{z}_{i,b}(\vec{x}_s; \vec{z}_{i,f}, y_i)$ is $L_{zb}$-Lipschitz
%            w.r.t. the cut-layer activations $\vec{z}_{i,f}$:
%            \begin{equation}
%                \norm{\vec{z}_{i,b}(\vec{x}_s; \vec{u}_f, y_i) - \vec{z}_{i,b}(\vec{x}_s;
%                    \vec{v}_f, y_i)} \leq L_{zb} \norm{\vec{u}_f - \vec{v}_f}
%            \end{equation}
%            for all $\vec{u}_f, \vec{v}_f \in \mathcal{Z}_f$.
    %\end{enumerate}
\end{assumption}

% ------------------------------------------------------------------------------
%\review{

With these assumptions, we are now in a position to state our next main result:
\begin{theorem}[FSL-SAGE with PAC Auxiliary Models]
\label{thm:pacbound}
    Under Assumptions \ref{asm:smooth}, \ref{asm:varbound}, and
    \ref{asm:lipschitz}, and with step-sizes $(\eta, \eta_L)$ satisfying the
    conditions in \thmref{convgnonlazy}, if the auxiliary model satisfies
    \asmref{auxil} with $r_i(\epsilon) = \bigO{\sfrac{1}{\epsilon^2}}$, then
    after $T$ rounds, the iterates in FSL-SAGE satisfy:
    \begin{align}
        \MoveEqLeft\min_{n \in \left\{ 1, \dots, \floor{T/l} \right\}}
            \Exp{\norm{\nabla f(\vec{x}^{nl - 1})}^2}
        \leq \frac{f(\vec{x}_0) - f^*}{c \min\{\eta_L, m \eta\} Q T}
        \nonumber \\
        &+ \frac{3 C K \eta_L}{2 Q \min\{\eta_L, m \eta\} \sqrt{T}}
            + \frac{\Phi(\eta_L, \eta)}{T}
        \nonumber \\
        &+ \frac{3 K \eta_L L_f^2}{
                2 c Q \min\{ \eta_L, m \eta \}
            } \frac{1}{T} \sum_{i=1}^T \varepsilon_{\star}^t
        \label{eq:fullpacbound}
    \end{align}
    where $C > 0$ and $0 < c < 0.5 - 20 K^2 \eta_L^2 \hat{L}_c^2$ are some
    constants, and $\varepsilon_{\star}^t := \frac{1}{m} \sum_{i=1}^m
    \mathcal{L}_i(\vec{x}_{a,i}^{t\star}, \vec{x}^t)$.
\end{theorem}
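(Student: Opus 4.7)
The plan is to start from Theorem~\ref{thm:convgnonlazy} and replace the round-averaged estimation error $\frac{1}{T}\sum_{t=1}^T \varepsilon^t$ by quantities controllable via \asmref{auxil}. The argument has three ingredients: a chain-rule reduction from parameter-space error to cut-layer-space error, the in-expectation PAC bound, and accounting for the fact that the auxiliary model is only refreshed every $l$ rounds and trained on a growing alignment pool.

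First I would pass the estimation error through the chain rule. Writing $\nabla_c \hat F_i = \Jac_{c,i}^{\!\top}\vec{\hat z}_{i,b}$ and $\nabla_c F_i = \Jac_{c,i}^{\!\top}\vec z_{i,b}$ with $\Jac_{c,i} = \partial \vec z_{i,f}/\partial \vec x_c$, \asmref{lipschitz} gives $\|\Jac_{c,i}\|_{\mathrm{op}} \le L_{zf}$ uniformly along the trajectory. The server-parameter components of $\nabla F_i$ and $\nabla \hat F_i$ coincide since no auxiliary model enters there. Taking expectation over the mini-batch and averaging over clients therefore yields $\Exp{\varepsilon^t} \le L_f^2 \cdot \frac{1}{m}\sum_{i=1}^m \Exp{\mathcal L_i(\vec x_{a,i}^t, \vec x^t)}$, with $L_f$ a constant multiple of $L_{zf}$ (absorbing any norm conversion between parameter- and cut-layer-space quantities).

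Next I would apply \asmref{auxil}. At alignment round $t = nl$, the ERM step has been performed on $r_n \approx nQ$ accumulated smashed samples, so, conditioning on the trajectory filtration up to round $nl$ and treating the subsequent draws as i.i.d.\ under the induced distribution, \defref{ielearn} with $r_i(\epsilon) = \bigO{1/\epsilon^2}$ gives $\Exp{\mathcal L_i(\vec x_{a,i}^{nl}, \vec x^{nl})} \le \mathcal L_i(\vec x_{a,i}^{nl,\star}, \vec x^{nl}) + \bigO{1/\sqrt{nQ}}$. Between alignments $\vec x_{a,i}$ is frozen while $\vec x_s^t$ keeps drifting, so this bound is sharpest immediately after an alignment; this explains why the theorem restricts the minimum to the subsequence $\{nl-1\}_{n=1}^{\lfloor T/l\rfloor}$. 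Using $\min_n \le \mathrm{avg}_n$ and substituting into Theorem~\ref{thm:convgnonlazy} splits the estimation-error contribution into an ``oracle'' piece $\frac{l}{T}\sum_n \varepsilon_\star^{nl-1}$ (matching, up to re-indexing and absorbing $l$ into constants, the last line of \eqref{fullpacbound}) and a ``statistical'' piece $\frac{l}{T}\sum_{n=1}^{\lfloor T/l\rfloor} \bigO{1/\sqrt{nQ}} = \bigO{1/\sqrt{T}}$ via $\sum_{n=1}^N n^{-1/2} = \bigO{\sqrt{N}}$. This produces the $3CK\eta_L/(2Q\min\{\eta_L, m\eta\}\sqrt T)$ term, with $C$ absorbing $L_f^2$, $Q$, $l$, and universal constants from the PAC bound.

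The main obstacle I foresee is justifying the PAC application on what is in fact a non-stationary trajectory: the alignment samples $\{(\vec z_{i,f}^j, y_i^j)\}_j$ are produced by an evolving sequence of client parameters $\vec x_{c,i}^{t,k}$, not by i.i.d.\ draws under a fixed measure. The standard remedy is to condition on the filtration up to the alignment round, freeze the induced distribution of $(\vec z_{i,f}, y_i)$, apply \defref{ielearn} under this conditional law, and peel off the outer expectation by the tower property. A secondary technicality is checking that the $L_{zf}$-bound in \asmref{lipschitz} transfers cleanly through every iterate visited along the trajectory so that the resulting $L_f$ is independent of $t$ and can safely be absorbed into $C$.
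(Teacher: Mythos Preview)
Your proposal is correct and follows essentially the same route as the paper: the chain-rule reduction via \asmref{lipschitz} to bound $\varepsilon^t$ by $L_f^2 \cdot \frac{1}{m}\sum_i \mathcal L_i(\vec x_{a,i}^t, \vec x^t)$, then \asmref{auxil} to split this into $L_f^2 \varepsilon_\star^t + \bigO{1/\sqrt{t}}$, and finally the elementary bound $\frac{1}{T}\sum_{t=1}^T t^{-1/2} \le 2/\sqrt{T}$. The paper's proof is in fact slightly less careful than yours about the alignment-interval bookkeeping (it simply writes $\epsilon = \bigO{1/\sqrt{t}}$ for every $t$ without singling out alignment rounds) and does not explicitly address the non-stationarity concern you flag; your treatment of the subsequence $\{nl-1\}$ and the filtration conditioning is a reasonable sharpening of the same argument.
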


%\begin{remark} \label{rem:convgrate}
Note that, for the step-size choices $\eta_L = \mathcal{O}(1/\sqrt{T})$ and $\eta =
\mathcal{O}(1/(m \sqrt{T}))$, the first three terms in the upper bound in
\eqref{fullpacbound}, decrease with rounds as $\mathcal{O}(1/\sqrt{T})$. 
The last term, given by $\sfrac{C}{T} \sum_{t=1}^{T} \varepsilon_{\star}^t$ where $C =
\bigO{1}$, is the average of $\varepsilon_{\star}^t$ over all
rounds.  The term $\varepsilon_{\star}^t$ is the error achieved by the best
hypothesis in $\mathcal{A}$ at round $t$ and is entirely determined by the
architecture of the auxiliary model.  It is trivially zero for cases when the
server-side model can be obtained for some realization of the auxiliary model
parameters.  A more interesting case is when the auxiliary model is typically
much smaller than the server-side model.  
For 2-layer auxiliary models, there are several results such as the \emph{universal approximation theorem} \cite{Cybenko:89:MCSS,Hornik:89:Elsevier}, and bounds for sufficiently wide or deep networks \cite{Lu:NIPS:17,Hanin:18:preprint,Hanin:19:MDPI} that show that $\varepsilon_{\star}^t$ can be made arbitrarily small.
Finally, from the above discussions, we have the following complexity results:
\begin{corollary}
For the step-size choices $\eta_L = \mathcal{O}(1/\sqrt{T})$ and $\eta =
\mathcal{O}(1/(m \sqrt{T}))$, the non-lazy FSL-SAGE with PAC-learnable auxiliary models achieves a finite-time convergence rate of $\mathcal{O}(1/\sqrt{T})$.
\end{corollary}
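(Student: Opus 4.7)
The plan is to derive the final corollary directly by substituting the stated step-size choices $\eta_L = \mathcal{O}(1/\sqrt{T})$ and $\eta = \mathcal{O}(1/(m\sqrt{T}))$ into the four-term upper bound \eqref{fullpacbound} supplied by \thmref{pacbound} and bookkeeping the order of each term. Since both step-sizes are eventually $o(1)$, for all $T$ sufficiently large the step-size feasibility constraints of \thmref{convgnonlazy} ($\eta \leq \tfrac{1}{4\sqrt{2}\,Q L_c}$ and $\eta_L \leq \tfrac{1}{2\sqrt{10}\,K\hat{L}_c}$) are automatically satisfied, so \thmref{pacbound} applies without modification.

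The key observation is that with these choices, $m\eta = \mathcal{O}(1/\sqrt{T})$, so $\min\{\eta_L, m\eta\} = \Theta(1/\sqrt{T})$. I would then handle the four terms of \eqref{fullpacbound} one at a time. Term 1, $\frac{f(\vec{x}^0)-f^*}{c\,\min\{\eta_L, m\eta\}\,Q\,T}$, immediately becomes $\mathcal{O}(\sqrt{T}/T) = \mathcal{O}(1/\sqrt{T})$. Term 2, $\frac{3CK\eta_L}{2Q\min\{\eta_L,m\eta\}\sqrt{T}}$, is a ratio of two $\Theta(1/\sqrt{T})$ quantities divided by $\sqrt{T}$, giving $\mathcal{O}(1/\sqrt{T})$. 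For term 3, $\Phi(\eta_L,\eta)/T$, I would substitute each summand of $\Phi$: the $\eta_L^2$ and $\eta^2$ terms divided by $\min\{\eta_L, m\eta\}$ contribute $\mathcal{O}(1/\sqrt{T})$ after dividing by $T$, while the $\eta_L^3$ and $\eta^3$ terms contribute a strictly smaller $\mathcal{O}(1/T)$; taking the dominant scaling yields $\Phi(\eta_L,\eta)/T = \mathcal{O}(1/\sqrt{T})$.

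The most delicate step is term 4, $\frac{3K\eta_L L_f^2}{2cQ\min\{\eta_L, m\eta\}}\cdot\frac{1}{T}\sum_{t=1}^T \varepsilon_\star^t$. The prefactor here is $\Theta(1)$ rather than $o(1)$, so the term cannot be bounded purely from the step-size scaling. This is where I would invoke the discussion immediately preceding the corollary: each $\varepsilon_\star^t = \frac{1}{m}\sum_i \mathcal{L}_i(\vec{x}_{a,i}^{t\star}, \vec{x}^t)$ is the irreducible approximation error of the auxiliary hypothesis class $\mathcal{A}_i$ evaluated at the current iterate, and under the universal-approximation-type assumptions noted in the paper this quantity can be driven arbitrarily close to zero by choosing a sufficiently expressive auxiliary model. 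Hence this term contributes at most an $\mathcal{O}(1)$ multiple of the worst-case approximation error, which is treated as negligible relative to $1/\sqrt{T}$ in the asymptotic sense of the corollary.

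Collecting the four bounds yields $\min_{n}\Exp{\|\nabla f(\vec{x}^{nl-1})\|^2} = \mathcal{O}(1/\sqrt{T})$, which is exactly the claimed rate. The main obstacle, as flagged above, is conceptual rather than computational: one must argue that term 4 does not spoil the rate despite its $\Theta(1)$ prefactor. The cleanest way to do so is to state the corollary under the implicit convention that the auxiliary-model family is rich enough that $\sup_t \varepsilon_\star^t = \mathcal{O}(1/\sqrt{T})$ (or smaller), which is consistent with the universal approximation arguments cited in the paragraph preceding the corollary; otherwise one obtains a rate of $\mathcal{O}(1/\sqrt{T}) + \mathcal{O}(\bar{\varepsilon}_\star)$ with the usual approximation-error floor.
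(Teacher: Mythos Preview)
Your proposal is correct and follows essentially the same approach as the paper: substitute the step-sizes into the first three terms of \eqref{fullpacbound} to obtain $\mathcal{O}(1/\sqrt{T})$, and then handle the fourth term by invoking the universal-approximation discussion so that $\varepsilon_\star^t$ is treated as negligible. If anything, you are more explicit than the paper in flagging that the fourth term carries a $\Theta(1)$ prefactor and therefore requires an expressiveness assumption on the auxiliary class rather than following from the step-size scaling alone; the paper makes the same move but leaves it implicit in the surrounding prose.
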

%\end{remark}

% adding figure here for better layout
\begin{figure*}[ht]
\centering
\begin{minipage}[t]{0.322\textwidth}
    \centering
    \includegraphics[width=\textwidth,trim={.3cm .7cm 0 .3cm}]{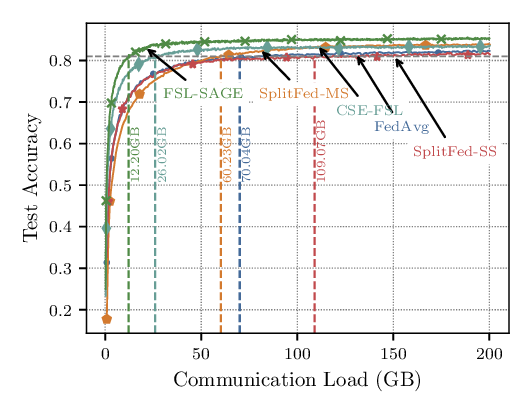}
    \captionof{figure}{\label{fig:cifar10_acc_comm}Test accuracy vs. communication load for
    ResNet-18 on CIFAR-10 distributed homogeneously across $10$ clients. Curves
    are labeled from best (leftmost) to worst (rightmost) in final accuracy.}
\end{minipage}%
\hspace{.2cm}
\begin{minipage}[t]{0.322\textwidth}
    \centering
    \includegraphics[width=\textwidth,trim={.3cm .7cm 0 .3cm}]{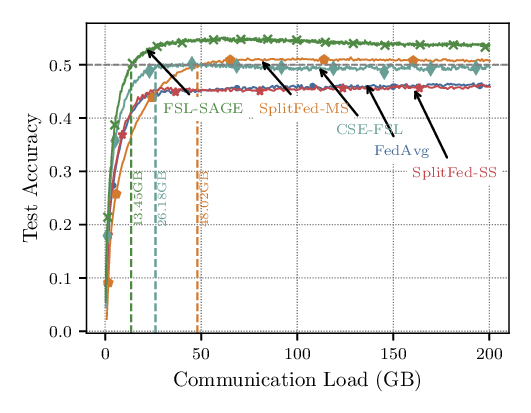}
    \caption{\label{fig:cifar100_acc_comm}Test accuracy vs. communication load for
    ResNet-18 on CIFAR-100 distributed homogeneously across $10$ clients.}
\end{minipage}%
\hspace{.2cm}
\begin{minipage}[t]{0.322\textwidth}
    \centering
    \includegraphics[width=\textwidth,trim={.3cm .7cm 0 .3cm}]{
        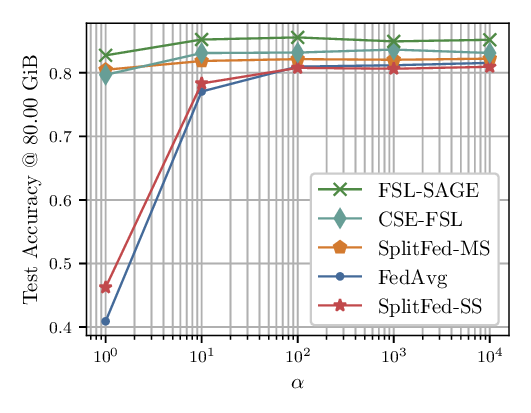}
    \captionof{figure}{\label{fig:acc_vs_diralpha}Best accuracy vs. Dirichlet
    $\alpha \in (0, \infty]$ for ResNet-18/CIFAR-10 with non-i.i.d. data for
    $10$ clients up to $\SI{80}{\gibi\byte}$ comm.}
\end{minipage}
\vspace{-0.3cm}
\end{figure*}

\subsection{Convergence Results of Lazy FSL-SAGE}
Since the alignment process is expensive in terms of communication costs,
in practice one may want to stop performing alignment of the auxiliary model after
$T' < T$ rounds, i.e., the Lazy FSL-SAGE apporach. 
Since the only difference in the convergence bound of non-lazy FSL-SAGE in \thmref{pacbound} and Lazy FSL stems from the estimation error's contribution to the upper bound $\varepsilon^t$, we immediately have the following result for the Lazy FSL-SAGE with slight modification of the bound in \thmref{pacbound}:

\begin{corollary}[Convergence Rate of Lazy FSL] \label{cor:pacboundlazy}
    Under Assumptions \ref{asm:smooth}, \ref{asm:varbound}, \ref{asm:auxil} and
    \ref{asm:lipschitz}, and the step-sizes $(\eta, \eta_L)$ satisfying the
    conditions in \thmref{convgnonlazy}, let us additionally assume that
    $\mathcal{A}$ is in-expectation learnable as per \defref{ielearn} with
    $r_i(\epsilon) = \bigO{\sfrac{1}{\epsilon^2}}$ for all $\delta$, then after
    $T>T'$ rounds Lazy FSL-SAGE satisfies:
    \begin{align}
        \MoveEqLeft\min_{n \in \left\{ 1, \dots, \floor{T/l} \right\}}
            \Exp{\norm{\nabla f(\vec{x}^{nl - 1})}^2}
        \leq \frac{f(\vec{x}_0) - f^*}{c \min\{\eta_L, m \eta\} Q T}
        \nonumber \\
        &+ \frac{3 C K \eta_L(1 + T'/T)}{2 Q \min\{\eta_L, m \eta\} \sqrt{T'}}
            + \frac{\Phi(\eta_L, \eta)}{T}
        \nonumber \\
        &+ \frac{3 K \eta_L L_f^2}{
                2 c Q \min\{ \eta_L, m \eta \}
            } \frac{1}{T} \sum_{i=1}^T \varepsilon_{\star}^t
        \label{eq:fullpacboundlazy}
    \end{align}
    where $c, C$ and $\varepsilon_{\star}^t$ are as defined in
    \thmref{pacbound}.
    %0 < c < 0.5 - 20 K^2 \eta_L^2 \hat{L}_c^2$ are some
    %constants, and $\varepsilon_{\star}^t$ is defined as:
    %\begin{equation}
    %    \varepsilon_{\star}^t := \frac{L_f^2}{m} \sum_{i=1}^m
    %        \mathcal{L}_i(\vec{x}_{a,i}^{t\star}, \vec{x}^t).
    %\end{equation}
\end{corollary}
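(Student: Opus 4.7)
The plan is to inherit the general convergence inequality from \thmref{convgnonlazy}, which holds irrespective of how or how often the auxiliary parameters $\vec{x}_{a,i}^t$ are produced. In that inequality, the only quantity that depends on the alignment schedule is the aggregate auxiliary-error term $\frac{1}{T}\sum_{t=1}^T \varepsilon^t$. The first, third, and fourth terms on the right-hand side of \eqref{fullpacboundlazy} appear with identical form in \thmref{pacbound} and carry through verbatim, so the entire task reduces to re-bounding $\frac{1}{T}\sum_t \varepsilon^t$ when alignments stop after round $T'$.

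I would split the sum at $T'$. For $t \leq T'$, the algorithm is indistinguishable from non-lazy FSL-SAGE, so the PAC argument from \thmref{pacbound} applies unchanged: by \asmref{auxil}, the ERM auxiliary model at round $t$ obeys $\Exp{\mathcal{L}_i(\vec{x}_{a,i}^t, \vec{x}^t)} \leq \mathcal{L}_i(\vec{x}_{a,i}^{t\star}, \vec{x}^t) + \mathcal{O}(\sqrt{l/t})$, and \asmref{lipschitz} (via the $L_{zf}$-Lipschitz cut-layer map) converts this loss-level statement into the gradient-level bound $\varepsilon^t \leq L_f^2\,\varepsilon_\star^t + \mathcal{O}(\sqrt{l/t})$. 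Summing over $t \in [1, T']$ contributes $\mathcal{O}(\sqrt{T'})$ from the sample-complexity piece and $L_f^2 \sum_{t \leq T'}\varepsilon_\star^t$ from the approximation piece.

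For $t > T'$, the auxiliary parameters are frozen at the values from the last alignment round $T_f := l\lfloor T'/l\rfloor$, while the server-side model $\vec{x}_s^t$, and therefore the fitting target $\vec{z}_{i,b}(\vec{x}^t)$, continues to drift. I plan to show that the freeze-time PAC bound still upper-bounds $\varepsilon^t$ by $\mathcal{O}(\sqrt{l/T'}) + L_f^2\,\varepsilon_\star^t$: the $1/\sqrt{T'}$ factor reflects that no new alignment samples arrive after $T_f$, so the excess-risk gap cannot shrink further, while the optimal error $\varepsilon_\star^t$ evaluated at the current server-side weights absorbs any drift of the target. Summing over the remaining $T-T'$ rounds and dividing by $T$ contributes $\mathcal{O}\bigl((T-T')/(T\sqrt{T'})\bigr)$. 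Combining both regimes and using the identity $\sqrt{T'}/T + 1/\sqrt{T'} = (1 + T'/T)/\sqrt{T'}$ reproduces exactly the coefficient of the middle term in \eqref{fullpacboundlazy}; the $\frac{1}{T}\sum_t \varepsilon_\star^t$ piece is carried through directly. Substitution back into \thmref{convgnonlazy} then yields the stated corollary.

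The main obstacle lies in the $t > T'$ regime. The PAC condition in \defref{ielearn} is a statement about the ERM output evaluated against the distribution it was trained on, but at a post-freeze round $t$, the effective target $\vec{z}_{i,b}(\cdot; \vec{x}_s^t)$ is generated by a server-side model different from the one present during the last alignment. Making the $\mathcal{O}(1/\sqrt{T'})$ claim rigorous requires either (i) a uniform excess-risk statement that is robust to such target drift, or (ii) explicit control of $\|\vec{x}_s^t - \vec{x}_s^{T_f}\|$ via the step-size schedule together with \asmref{smooth} and \asmref{varbound}, so that the drift is dominated by the PAC rate. Both routes are standard but must be executed carefully to avoid inflating constants. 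All remaining bookkeeping---the Lipschitz conversion from auxiliary-loss to gradient-level error and matching constants with those defined in \thmref{pacbound}---is mechanical.
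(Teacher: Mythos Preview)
Your proposal is correct and matches the paper's argument: inherit the bound from \thmref{convgnonlazy}, split $\frac{1}{T}\sum_t \varepsilon^t$ at $T'$, apply the PAC rate $C_1/\sqrt{t}$ for $t\le T'$ and the frozen rate $C_1/\sqrt{T'}$ for $t>T'$, then combine via exactly the arithmetic identity you state to produce the $(1+T'/T)/\sqrt{T'}$ coefficient. The paper's proof is in fact terser than yours---it simply writes down the split sum and the resulting inequality without addressing the post-freeze target-drift obstacle you flag, so that subtlety is left implicit in the paper as well.
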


%There are a few theoretical works \cite{Han:24:NIPS,Li:23:NIPS} which study
%the analysis of SplitFed and Sequential FL, which are related but differ
%significantly from our analysis.
%Although \citet{Han:24:NIPS} studied the convergence of SplitFed without any auxiliary
%models or local losses, to the best of our knowledge, our work is the first to provide an
%explicit relationship between the approximation power of the auxiliary models.
%To add to the challenges faced in \citet{Han:24:NIPS}, in our setting the
%client-side model is updated through an \emph{approximation} of the server-side
%model, which introduces approximation errors in the client-side model.

Although \citet{Han:24:NIPS} studied the convergence of SplitFed without any
auxiliary models or local losses, to add to the challenges faced in their
setting, in our case, the client-side model is updated through an
\emph{approximation} of the server-side model, which introduces approximation
errors in the client-side model.
To the best of our knowledge, our work is the first to provide an explicit
relationship between the convergence rate and the approximation power of the
auxiliary models.

An important limitation between FSL with sequentially processed server-side
model and FL, which manifests in our convergence analysis is the \emph{lack of
linear speedup} \cite{Yang:21:ICLR}, i.e., the convergence rate does not
reduce with the increase in the number of clients $m$.
Note that in all FSL approaches \cite{Han:24:NIPS,Mu:23:ICC}, including in our work, where a single server-side model is maintained, the server-side model must be sequentially processed, thus losing linear speedup.

\section{Experimental Results and Discussion}

In this section, we conduct numerical experiments to verify the efficacy of our
proposed FSL-SAGE algorithm\footnote{Our source code is available at {\small \url{https://github.com/srijith1996/FSL-SAGE}}.}.

{\bf 1) Experiment Settings:}
{\em 1-a) Compute and Baselines:}
We compare FSL-SAGE with FedAvg \cite{McMahan:16:CoRR}, SplitFedv1 and
SplitFedv2 \cite{Thapa:22:AAAI}, and CSE-FSL \cite{Mu:23:ICC}.
%For our simulations, we implement the client-server architecture using simple
%loops in python with PyTorch \cite{pytorch} on a single NVIDIA H100 GPU.
We use PyTorch for training
%and our setup runs several parallel workloads of training
on a single NVIDIA H100 NVL GPU with $80$GB of memory.
%, although a much smaller GPU would also suffice.

%\begin{table}
%    \centering
%    \begin{tabular}{|l|c|c|c|c|}
%        \hline
%        \textbf{Base} & \textbf{Params} & \textbf{Client} & \textbf{Server} & \textbf{Aux} \\
%        \hline
%        Resnet18  & $11.19$M & $685$K & $10.5$M & $2.1$M \\
%        Resnet56  & $13.57$M & $670.2$K & $12.9$M & $2.6$M\\
%        Resnet110 & $27.5$M & $1.3$M & $26.2$M & $5.3$M\\
%        \hline
%    \end{tabular}
%    \caption{\label{tab:resnets}Number of parameters in ResNet models.}
%\end{table}

{\em 1-b) Datasets:}
Although FL generally applies to a wide range of machine learning tasks, we
focus on two tasks:
1) image classification on CIFAR-10 and CIFAR-100 datasets
\cite{cifar10-100}; and
2) natural language generation using the E2E \cite{e2e} dataset.
%by finetuning \cite{Hu:21:preprint} GPT2-m for text completion on .
The CIFAR datasets contain $60$K $32 \times 32$ 3-channel images with $10$
and $100$ classes respectively.
To simulate the effect of data heterogeneity we use the Dirichlet distribution
parameterized by $\alpha \in (0, \infty)$ to determine the proportion of class
labels \cite{Hsu:19:preprint}, where smaller $\alpha$ indicates more
heterogeneity of class label distribution.

{\em 1-c) Models:}
For image classification, we use the ResNet-18 \cite{resnet}, which comprises of
$4$ ResNet blocks.
We split the network in between block $2$ and $3$ to create a client-side model
with $685$K parameters and a server-side model with $10.5$M parameters.
%Although the number of blocks in both models are same, a higher number of
%channels in the latter half of the model makes the server-side much larger.
%The model sizes are summarized in \tabref{resnets}.
For the auxiliary models, we arbitrarily cascade the first server ResNet block
with the final fully-connected layer, yielding $2.1$M parameters.
For natural language generation, we perform LoRA finetuning
\cite{Hu:21:preprint} of GPT2-medium \cite{Radford:19:OpenAI}.
We split the model after the first attention block, yielding $66.2$M parameters
for the client-side model and $365.7$M parameters for the server-side model.
For the auxiliary model, we use the first $3$ attention blocks cascaded with the
language decoder of the server-side model ($92.4$M params).

{\em 1-d) Hyperparameters:}
For image classification, we use a batch-size of $256$.
Clients train their models for $1$ epoch on their local dataset per federated
averaging round.
For CSE-FSL and FSL-SAGE, the cut-layer features are sent to the server-side model
every $5$ local steps, and for FSL-SAGE, the auxiliary models are aligned with
the server every $l=10$ rounds.
We stop training when the communication cost incurred exceeds
$\SI{200}{\gibi\byte}$.
The client-side and server-side models are optimized using Adam
\cite{Kingma:14:ICLR} with a learning rate $10^{-3}$, weight decay
$10^{-4}$, and $\beta_1=0.9, \beta_2=0.999$.
For alignment, we use the same optimizer settings with no weight decay. 
For better interpretability, we use SplitFed-MS (multi-server) in lieu of
SplitFedv1 and SplitFed-SS (single-server) in lieu of SplitFedv2.

% explain setup
% - single GPU simulation of server-client architecture using pytorch (citation)
% - simple conv neural network used in CSE-FSL
% - CIFAR-10 dataset (citation); distribution across clients (iid, non-iid?)
% - hyperparameters: batch size, learning rates for server and clients, number of epochs, etc.
% - evaluation metric: accuracy
\begin{table}[b]
    \vspace{-0.5cm}
    \centering
    \small
    \begin{tabular}{|l|c|c|c|c|}%c|c|c|c|c|c|c|c|}
        \hline
        \multirow{2}{*}{\textbf{Algorithm}}
            & \multicolumn{2}{c|}{\textbf{CIFAR-10}} & \multicolumn{2}{c|}{\textbf{CIFAR-100}} \\
                \cline{2-5}
            %& \multicolumn{2}{c|}{\textbf{ResNet-18}} &
                %\multicolumn{2}{c|}{\textbf{ResNet-56}} &
                %\multicolumn{2}{c|}{\textbf{ResNet-110}} &
            %    \multicolumn{2}{c|}{\textbf{ResNet-18}} &
                %\multicolumn{2}{c|}{\textbf{ResNet-56}} &
                %\multicolumn{2}{c|}{\textbf{ResNet-110}}\\
                %\cline{2-13}
            & iid & non-iid & %iid & non-iid & iid & non-iid & iid & non-iid & iid & non-iid &
                iid & non-iid \\
        \hline
        \textbf{FedAvg}     & $82.08$           & $42.72$           & $45.79$           & $35.26$         \\
        \textbf{SplitFedv1} & \secbest{84.21}   & \secbest{81.48}   & $50.78$           & \secbest{51.03} \\
        \textbf{SplitFedv2} & $81.42$           & $46.79$           & $45.54$           & $35.57$         \\
        \textbf{CSE-FSL}    & $83.74$           & $80.40$           & \secbest{50.94}   & $49.55$         \\
        \textbf{FSL-SAGE}   & \best{85.71}      & \best{82.75}      & \best{56.06}      & \best{51.90}    \\
        \hline
    \end{tabular}
    \caption{\label{tab:results}Best test accuracy (\%) for ResNet-18 trained
    up to $\SI{200}{\gibi\byte}$.  \best{\text{Best}} and \secbest{\text{second
    best}} results are colored and bold.
    }
\end{table}

{\bf 2) Results and Discussions:}
% - Comparison with respect to epochs against FedAvg, Split Learning and CSE-FSL.
% - Comparison with respect to channel usage against FedAvg, Split Learning and CSE-FSL.
{\em 2-a) Image Classification:}
In \figref{cifar10_acc_comm}, we plot the test accuracy against communication
load on $10$ clients on CIFAR-10, averaged over 4 training runs.
FSL-SAGE outperforms all the other methods in final accuracy.
More importantly, for a given level of accuracy, say $81\%$, FSL-SAGE achieves
almost $2.2\times$ lesser communication cost than CSE-FSL, and $10\times$ lesser
than SplitFed-SS.
\Figref{cifar100_acc_comm} depicts a similar plot for CIFAR-100.
%FSL-SAGE significantly outperforms the other methods in accuracy,
%while consuming a small fraction of the communication resources.
Table \ref{tab:results} summarizes the best accuracies on CIFAR datasets.

In \figref{acc_vs_diralpha}, we plot the best accuracy achieved by each method
over the course of training on CIFAR-10, upto a communication load of
$\SI{80}{\gibi\byte}$, against the Dirichlet distribution parameter $\alpha \in
(0, \infty)$.
We vary $\alpha$ from $10^{-1}$ (high heterogeneity) to $10^{4}$ (low
heterogeneity) to simulate the effect of uneven data distribution among $10$
clients.
While FedAvg and SplitFed-SS do not converge within the specified communication
budget under high heterogeneity, the other three methods are much more robust
to data heterogeneity.
%The performance of FSL-SAGE under high heterogeneity is on-par with the other methods under low heterogeneity.
FSL-SAGE is robust within the given range of heterogeneity, and outperforms
other methods.
We present some supplementary plots for the image classification experiments in
\appref{expresults}.

\begin{figure}[t]
    \centering
    \includegraphics[width=0.74\columnwidth,trim={.3cm .7cm 0 .3cm}]{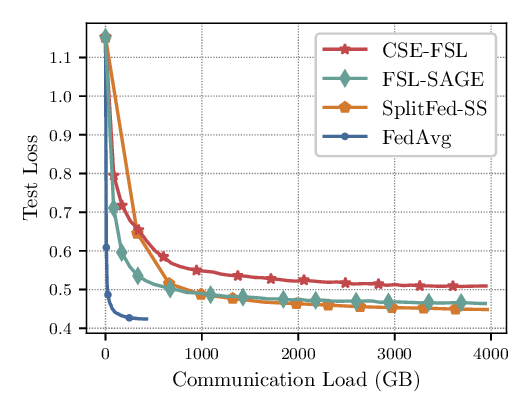}
    \caption{\label{fig:nlg_gpt2_loss}Test loss vs. communication load for
        LoRA finetuning of GPT2-m on i.i.d. E2E distributed across $3$ clients.}
    %\vspace{-0.5cm}
\end{figure}
{\em 2-b) LoRA Fine-tuning GPT2 on E2E:}
In order to study the convergence rates of different FL/SL methods, we also
perform an experiment on fine-tuning the pretrained GPT2-medium
\cite{Radford:19:OpenAI} on the E2E dataset \cite{e2e}, which is a tabular to
natural language generation problem.  In \figref{nlg_gpt2_loss}, we plot the
masked cross-entropy loss against communication load for test data.
The methods are run upto $\SI{4}{\tera\byte}$ of communication or $50$ rounds,
whichever is earlier.
We observe that FSL-SAGE converges faster, and is more accurate than its
auxiliary-based counterpart CSE-FSL, and almost as fast as SplitFed-SS.
FedAvg converges the fastest since, for GPT2-medium, communicating the
model once per round consumes much lesser bytes than more frequently
transmitting the smashed data. %transmission which is much more frequent.

% ------------------------------------------------------------------------------
\section{Conclusion} \label{sec:conclusion}

In this paper, we proposed a new federated split learning algorithm, FSL-SAGE, which facilitates the training of large models using FL, while enjoying the benefits of data parallelism.
Our method leverages parallel training of client-side models while incorporating server feedback via auxiliary models.
FSL-SAGE has a finite-time convergence rate of $\mathcal{O}(1/\sqrt{T})$ for $T$
communication rounds, which matches FedAvg.
We conducted extensive experiments with large-size computer vision and natural language models to verify the efficacy and the significant amount of communication cost savings of our proposed FSL-SAGE method.
% In image classification, our method achieves comparable or better accuracy than
% compared to other state-of-the-art FL/SL methods while using lesser than half of
% the communication resources.
% Finally, our method is able to converge faster than CSE-FSL, its auxiliary model
% counterpart, in the LoRA fine-tuning task.

% ------------------------------------------------------------------------------
\section*{Acknowledgements} %\label{sec:ack}
This work is supported in part by ONR grant N00014-24-1-2729; NSF grants CAREER
2110259, 2112471, and 2324052; DARPA YFA D24AP00265 and DARPA
HR0011-25-2-0019.
%; and AFRL grant PGSC-SC-111374-19s.

% ------------------------------------------------------------------------------
\section*{Impact Statement} %\label{sec:impact}
This paper presents work whose goal is to advance the field of Machine Learning.
There are many potential societal consequences of our work, none of which we
feel must be specifically highlighted here.

% ------------------------------------------------------------------------------
\bibliography{bib/refs,bib/foundation,bib/reviews}
\bibliographystyle{icml2025}
%\bibliographystyle{IEEEtran}
%\bibliography{refs}

% ==============================================================================
%% APPENDIX
\newpage
\appendix
\onecolumn
\counterwithin*{equation}{section}
\renewcommand{\theequation}{\thesection.\arabic{equation}}

% ------------------------------------------------------------------------------
\section{Notation} \label{app:extra_notation}
% ------------------------------------------------------------------------------
\begin{figure*}[ht]
    \centering
    \begin{tikzpicture}[thick,scale=0.8, every node/.style={scale=1}]

    % Mark and label points on the lines
    \node (xi) at (0, 0) [fill=blush, circle, inner sep=3pt] {};
    %\node at (2, 0) [circle, draw, inner sep=1.5pt] {};
    \node (zif) at (7.5, 0) [fill=blush, circle, inner sep=3pt] {};
    \node (Fi) at (15, 0) [fill=blush, circle, inner sep=3pt] {};
    \node (Fhati) at (15, -3) [fill=blush, circle, inner sep=3pt] {};

    % Draw the horizontal lines
    \draw[-{Latex[length=3mm]}, color=darktaupe, very thick] (xi.east) -- (zif.west);
    \draw[-{Latex[length=3mm]}, color=darktaupe, very thick] (zif.east) -- (Fi.west);
    \draw[-{Latex[length=3mm]}, color=darktaupe, very thick] (7.5, -3) -- (Fhati.west);
    
    % Add labels
    \node at (0, -0.43) {$\vec{\xi}_i$};
    \node at (7.5, -0.43) {$\vec{z}_{i,f}(\vec{x}_c; \vec{\xi}_i)$};
    \node at (15, -0.43) {$F_i(\vec{x}_s, \vec{z}_{i,f}, y_i)$};
    \node at (15, -3.43) {$\hat{F}_i(\vec{x}_{a,i}, \vec{z}_{i,f}, y_i)$};
    
    % For activations
    \draw[dashed, color=gray] (7.5, 2.5) -- (zif.north);
    \draw[dashed, color=gray] (7.5, -1) -- (7.5, -3);
    \draw[dashed, color=gray] (15, 2.5) -- (Fi.north);
    \draw[dashed, color=gray] (15, -0.75) -- (Fhati.north);

    % For parameters
    \draw[dashed, color=cerulean] (3.75, 2.5) -- (3.75, -3);
    \draw[dashed, color=cerulean] (11.25, 2.5) -- (11.25, 0);
    
    % gradient arrows
    \draw[{Latex[length=3mm]}-, color=brightcerulean] (3.75, 2.25) -- (15, 2.25)
        node[pos=0.17, below, color=brightcerulean] {$\vec{g}_{c,i}(\vec{x}_c, \vec{x}_s; \vec{\xi}_i, y_i)$};
    \draw[{Latex[length=3mm]}-, color=cadmiumgreen] (7.5, 1.5) -- (15, 1.5)
        node[pos=0.25, below, color=cadmiumgreen] {$\vec{z}_{i,b}(\vec{x}_s; \vec{z}_{i,f}, y_i)$};
    \draw[{Latex[length=3mm]}-, color=brightcerulean] (11.25, 0.75) -- (15, 0.75)
        node[pos=0.5, below, color=brightcerulean] {$\vec{g}_{s,i}(\vec{x}_s; \vec{z}_{i,f}, y_i)$};

    \draw[{Latex[length=3mm]}-, color=brightcerulean] (3.75, -1.5) -- (15, -1.5)
        node[pos=0.17, below, color=brightcerulean] {$\vec{\hat{g}}_{c,i}(\vec{x}_c, \vec{x}_{a,i}; \vec{\xi}_i, y_i)$};
    \draw[{Latex[length=3mm]}-, color=cadmiumgreen] (7.5, -2.25) -- (15, -2.25)
        node[pos=0.25, below, color=cadmiumgreen] {$\vec{\hat{z}}_{i,b}(\vec{x}_{a,i}; \vec{z}_{i,f}, y_i)$};

    \draw[{Latex[length=3mm]}-, color=burgundy] (3.75, 1.125) -- (7.5, 1.125)
        node[pos=0.5, below, color=burgundy] {$\vec{J}_{i}(\vec{x}_c; \vec{\xi}_i)$};
    
    % Additional labels
    \node at (3.5, -0.45) [color=cerulean] {$\vec{x}_c$};
    \node at (11, -0.45) [color=cerulean] {$\vec{x}_s$};
    \node at (11, -3.45) [color=cerulean] {$\vec{x}_{a,i}$};

    \end{tikzpicture}

    \caption{\label{fig:outline}Outline and notation of gradient flows in FSL:
    \textcolor{brightcerulean}{Blue} arrows denote gradients w.r.t. model parameters;
    \textcolor{cadmiumgreen}{green} arrows represent gradients w.r.t. the cut-layer;
    and \textcolor{burgundy}{brown} arrow indicates the Jacobian of the
    cut-layer w.r.t. the client-side parameters.}
\end{figure*}
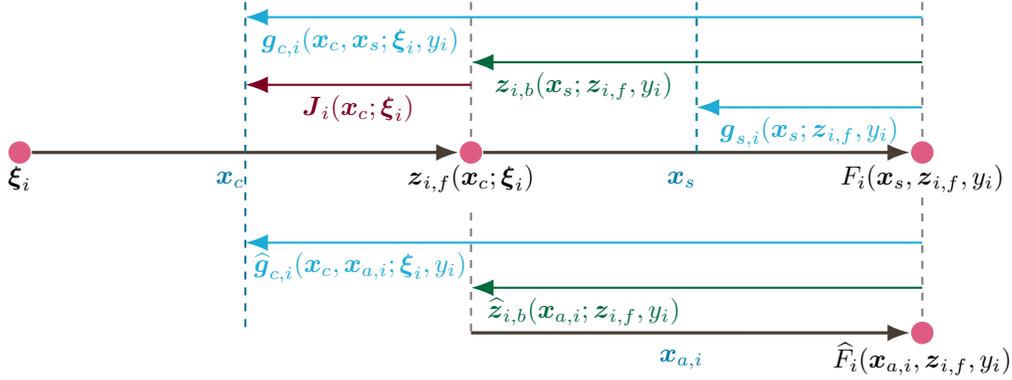

% ------------------------------------------------------------------------------
We use the following additional notation in our convergence analysis.
We denote the gradients of the loss $F_i(\vec{x}; \vec{\xi}_i, y_i)$ associated
with client $i$, w.r.t. the server-side model as $\vec{g}_{s,i} := \partial
F_i/\partial \vec{x}_s$ and w.r.t. the client-side model as:
\begin{equation}
  \vec{g}_{c,i} := \frac{\partial F_i}{\partial \vec{x}_c}
  = \frac{\partial \vec{z}_{i,f}'}{\partial \vec{x}_c} . \frac{\partial F_i}{\partial
    \vec{z}_{i,f}}
  = \Jac_{c,i} . \vec{z}_{i,b}.
  \label{eq:xcgrad}
\end{equation}
where we use $\Jac_{c,i}$ to denote the transpose of the Jacobian matrix
$\partial \vec{z}_{i,f}/\partial \vec{x}_c'$ of the cut-layer activations
$\vec{z}_{i,f}$ with respect to the client-side model weights $\vec{x}_{c,i}$
for the $\nth{i}$ client.
The returned gradient $\vec{z}_{i,b}$ is a function of the model $\vec{x}$, and
the client's mini-batch $(\vec{\xi}_i, y_i)$, and can be written as
$\vec{z}_{i,b}(\vec{x}; \vec{\xi}_i, y_i)$.  

We denote the gradients of the loss $F_i(\vec{x}; \vec{\xi}_i, y_i)$
associated with client $i$ as $\vec{g}_i := (\vec{g}_{c,i}, \vec{g}_s)$, where
$\vec{g}_{c,i}$ and $\vec{g}_s$ denote the gradients with respect to the
client-side and server-side models respectively.

\Figref{outline} outlines the gradients used in our analysis.  The following
equations define the notation used there.
\begin{subequations}
    \begin{align}
        \vec{g}_{s,i}(\vec{x}_s; \vec{z}_{i,f}, y_i) &:=
            \derv{F_i}{\vec{x}_s} (\vec{x}_s; \vec{z}_{i,f}, y_i)
        \\
        \vec{z}_{i,b}(\vec{x}_s; \vec{z}_{i,f}, y_i) &:= 
            \derv{F_i}{\vec{z}_{i,f}} (\vec{x}_s; \vec{z}_{i,f}, y_i)
        \\
        \vec{J}_i(\vec{x}_c; \vec{\xi}_i) &:= \derv{\vec{z}_{i,f}'}{\vec{x}_c}
            (\vec{x}_c; \vec{\xi}_i)
        \\
        \vec{g}_{c,i}(\vec{x}_c, \vec{x}_s; \vec{\xi}_i, y_i) &:= 
            \vec{J}_i(\vec{x}_c; \vec{\xi}_i)~.~
            \vec{z}_{i,b}(\vec{x}_s; \vec{z}_{i,f}, y_i)
        \\
        \vec{\hat{z}}_{i,b}(\vec{x}_{a,i}, \vec{z}_{i,f}, y_i) &:= 
            \derv{\hat{F}_i}{\vec{z}_{i,f}} (\vec{x}_{a,i}; \vec{z}_{i,f}, y_i)
        \\
        \hat{\vec{g}}_{c,i}(\vec{x}_c, \vec{x}_{a,i}; \vec{\xi}_i, y_i) &:= 
            \vec{J}_i(\vec{x}_c; \vec{\xi}_i)~.~
            \hat{\vec{z}}_{i,b}(\vec{x}_{a,i}; \vec{z}_{i,f}, y_i)
    \end{align}
\end{subequations}
Also, we denote the gradient w.r.t. the client-side model with $\nabla_c$ and
the w.r.t. the server-side model as $\nabla_s$.

Table \ref{tab:notation} summarizes all the notation used in this paper.

% ------------------------------------------------------------------------------
\begin{table*}
    \caption{\label{tab:notation}Notations for FSL-SAGE}
    \begin{tabular}{|p{0.15\textwidth}|p{0.8\textwidth}|}
        \hline
        \textbf{Quantity} & \textbf{Meaning}                                        \\
        \hline
        $m$ & Number of clients                                                     \\
        $T$ & Total number of communication rounds                                  \\
        $K$ & Number of client local updates                                        \\
        $l$ & Frequency of server-side or auxiliary model updates                   \\
        $p := \floor{\frac{T}{l}}/T$ & Fraction of rounds for which server-side
                                    model is updated                                \\
        \hline
        $\vec{x}_c, \vec{x}_s, \vec{x}_{a,i}$ & Client-side, server-side and
                    auxiliary model parameters \\
        $\vec{\tilde{x}}$ & Concatenation of client-side and auxiliary model parameters \\
        $\vec{x}$ & Concatenation of client-side and server-side model parameters \\
        $\vec{g}_{c,i}, \vec{g}_s, \vec{g}_i$ & Gradients of $f(\cdot)$ w.r.t.
            client-, server-side and concatenated model parameters \\
        \hline
        $\eta_L, \eta$ & Learning rates for client- and server-side models respectively \\
        $f_0, f^*$ & cost function values at initialization and end of $T$ rounds   \\
        $\nabla_c, \nabla_s$ & Gradients w.r.t. client- and server-side models         \\
        \hline
        $L_c, \hat{L}_c$ & Lipschitz constants of $\nabla F_i$ and $\nabla \hat{F}_i$   \\
        $L_{zf}$ & Lipschitz constant of $\vec{z}_{i,f}$ w.r.t. client-side model   \\
        %$L_{zb}$ & Lipschitz constant of $\vec{z}_{i,b}$ w.r.t. $\vec{z}_{i,f}$     \\
        \hline
        $\sigma_L$ & Bound on variance of local stochastic gradients                \\
        $\sigma_G$ & Bound on variance of estimating global cost function gradient  \\
        \hline
    \end{tabular}
\end{table*}

% ------------------------------------------------------------------------------

\section{Proofs} \label{app:proofs}

\subsection{Proof of \thmref{convgnonlazy}} \label{app:convgnonlazyproof}
% ------------------------------------------------------------------------------
%\subsubsection{Client Drift}

We first state the following useful lemma that bounds the $\nth{k}$ iterate
averaged across clients, which we call the \emph{client drift}. The proof is
provided in \appref{driftboundproof}.

% drift bound lemma
\begin{lemma}(Client drift bound) \label{lem:driftbound}
    For any step-size satisfying $\eta_L \leq \frac{1}{\sqrt{10} K \hat{L}_c}$,
    the client drift for any $k \in {0, \dots, K-1}$ can be bounded as:
    \begin{align}
        \frac{1}{m} \sum_{i=1}^m \Expsub{t}{
        \norm{\vec{x}_{c,i}^{t,k} - \vec{x}_c^t}^2}
            &\leq 20 K \eta_L^2 \biggl[
                (\hat{\sigma}_L^2 + K \sigma_G^2)
                + K \norm{\nabla_c f(\vec{x}^t)}^2
            \nonumber \\
            &\qquad~+ \frac{K}{m} \sum_{i=1}^m \norm{
                    \nabla_c \hat{F}_i(\vec{\tilde{x}}_i^t)
                    - \nabla_c F_i(\vec{x}^t)}^2
            \biggr]
    \end{align}
\end{lemma}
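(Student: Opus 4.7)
The plan is to bound the local drift by unrolling the client's inner update, decomposing the stochastic gradient into its local-unbiased mean, adding and subtracting reference gradients at $\vec{x}^t$, and then closing a self-referential inequality using the step-size condition.

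First I would start from the recursion $\vec{x}_{c,i}^{t,k+1} = \vec{x}_{c,i}^{t,k} - \eta_L \hat{\vec{g}}_{c,i}(\vec{x}_{c,i}^{t,k}, \vec{x}_{a,i}^t; \vec{\xi}_i^{t,k}, y_i^{t,k})$ together with $\vec{x}_{c,i}^{t,0} = \vec{x}_c^t$. Summing gives $\vec{x}_{c,i}^{t,k} - \vec{x}_c^t = -\eta_L \sum_{j=0}^{k-1} \hat{\vec{g}}_{c,i}^{t,j}$. Taking norms and applying Jensen's inequality (or the power-mean inequality) yields $\norm{\vec{x}_{c,i}^{t,k} - \vec{x}_c^t}^2 \le \eta_L^2 k \sum_{j=0}^{k-1} \norm{\hat{\vec{g}}_{c,i}^{t,j}}^2$. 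After taking $\Expsub{t}{\cdot}$ and using Assumption~\ref{asm:varbound}(\ref{itm:clocal}) (unbiased mini-batch gradient with variance $\hat{\sigma}_L^2$), I can split each squared gradient into its variance plus the squared mean, obtaining $\Expsub{t}{\norm{\hat{\vec{g}}_{c,i}^{t,j}}^2} \le \hat{\sigma}_L^2 + \norm{\nabla_c \hat{F}_i(\vec{x}_{c,i}^{t,j}, \vec{x}_{a,i}^t)}^2$.

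Next I would perform a four-way decomposition of the deterministic gradient:
\begin{align*}
\nabla_c \hat{F}_i(\vec{x}_{c,i}^{t,j}, \vec{x}_{a,i}^t)
&= \underbrace{\bigl[\nabla_c \hat{F}_i(\vec{x}_{c,i}^{t,j}, \vec{x}_{a,i}^t) - \nabla_c \hat{F}_i(\vec{x}_c^t, \vec{x}_{a,i}^t)\bigr]}_{\text{smoothness term}} \\
&\quad + \underbrace{\bigl[\nabla_c \hat{F}_i(\vec{\tilde{x}}_i^t) - \nabla_c F_i(\vec{x}^t)\bigr]}_{\text{auxiliary estimation error}} \\
&\quad + \underbrace{\bigl[\nabla_c F_i(\vec{x}^t) - \nabla_c f(\vec{x}^t)\bigr]}_{\text{inter-client heterogeneity}} + \nabla_c f(\vec{x}^t),
\end{align*}
then apply the generalized triangle inequality $\norm{\sum_{r=1}^4 a_r}^2 \le 4 \sum_r \norm{a_r}^2$. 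Assumption~\ref{asm:smooth}(\ref{itm:ghatci}) controls the first term by $\hat{L}_c^2 \norm{\vec{x}_{c,i}^{t,j} - \vec{x}_c^t}^2$, and Assumption~\ref{asm:varbound}(\ref{itm:global}) handles the third by $\sigma_G^2$. Averaging over $i \in [m]$ then collects the auxiliary error term and the heterogeneity bound into the desired form.

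At this stage I am left with a self-referential inequality of the form
$A_k \le 5 \eta_L^2 k \sum_{j=0}^{k-1} \bigl[\hat{\sigma}_L^2 + \sigma_G^2 + \norm{\nabla_c f(\vec{x}^t)}^2 + \tfrac{1}{m}\sum_i \norm{\nabla_c \hat{F}_i(\vec{\tilde{x}}_i^t) - \nabla_c F_i(\vec{x}^t)}^2 + \hat{L}_c^2 A_j\bigr]$, where $A_k := \tfrac{1}{m}\sum_i \Expsub{t}{\norm{\vec{x}_{c,i}^{t,k} - \vec{x}_c^t}^2}$. The main obstacle is closing this recursion with the stated constant $20$: I would bound $A_j \le \max_{k' \le K-1} A_{k'}$, take the max over $k$ on the left, and use the step-size constraint $\eta_L \le \tfrac{1}{\sqrt{10} K \hat{L}_c}$ so that the coefficient $5 K^2 \eta_L^2 \hat{L}_c^2 \le 1/2$ is absorbed onto the left-hand side. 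Rearranging multiplies the leading constant by a factor of $2$, producing the prefactor $20 K \eta_L^2$ and the exact bracketed expression in the statement. The delicate part is choosing Young's inequality weights so the numeric constants line up; too loose a split would inflate $20$ to a larger number, so I would work backward from the target constant to fix the $4$ versus $5$ splitting in the triangle inequality step.
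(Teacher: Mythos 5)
Your overall skeleton — decompose the gradient into (stochastic noise) + (smoothness drift) + (auxiliary estimation error) + (heterogeneity) + (global gradient), control each piece with Assumptions \ref{asm:smooth}\ref{itm:ghatci}, \ref{asm:varbound}\ref{itm:clocal}, \ref{asm:varbound}\ref{itm:global}, and close a self-referential inequality via the step-size condition — is the same as the paper's, and your closing of the recursion (taking the max over $k$ and absorbing $5K^2\eta_L^2\hat{L}_c^2 \le 1/2$ onto the left) is valid. However, there is a genuine gap in how you handle the stochastic noise, and it prevents you from obtaining the bound as stated. By fully unrolling to $\vec{x}_{c,i}^{t,k}-\vec{x}_c^t = -\eta_L\sum_{j=0}^{k-1}\vec{\hat{g}}_{c,i}^{t,j}$ and then applying Cauchy--Schwarz across all $k$ summands, you pay a factor of $k$ on \emph{every} term, including the mini-batch noise. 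This yields a contribution of order $K^2\eta_L^2\hat{\sigma}_L^2$ to the drift, whereas the lemma asserts $20K\eta_L^2\hat{\sigma}_L^2$ — note the deliberate asymmetry inside the bracket, $(\hat{\sigma}_L^2 + K\sigma_G^2)$, where only $\sigma_G^2$ carries the extra factor of $K$. For $K>2$ your route gives a strictly weaker inequality, so no amount of tuning the Young's-inequality weights (the ``4 versus 5 splitting'' you mention) will recover the stated constant: the issue is the order in $K$, not the numerical prefactor.

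The paper avoids this by working with the one-step recursion
$\Expsub{t}{\norm{\vec{x}_{c,i}^{t,k}-\vec{x}_c^t}^2} = \Expsub{t}{\norm{\vec{x}_{c,i}^{t,k-1}-\vec{x}_c^t-\eta_L\vec{\hat{g}}_{c,i}^{t,k-1}}^2}$
and exploiting that $\vec{\hat{g}}_{c,i}^{t,k-1}-\nabla_c\hat{F}_i(\vec{\tilde{x}}_i^{t,k-1})$ is a martingale difference: its cross term with the past vanishes in conditional expectation, so it enters with coefficient $5\eta_L^2$ (no $K$), while the four deterministic terms are handled with the weighted Young's inequality $\norm{a+b}^2 \le (1+\tfrac{1}{2K-1})\norm{a}^2 + 2K\norm{b}^2$ and pick up $5K\eta_L^2$. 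Unrolling the resulting contraction $(1+\tfrac{1}{K-1})^k \le 5$ then gives the extra factor of $4K$ and hence $20K\eta_L^2\hat{\sigma}_L^2$ versus $20K^2\eta_L^2\sigma_G^2$. To repair your argument, either switch to this single-step recursion, or, if you insist on unrolling first, split the sum as $\sum_j\vec{\hat{g}}_{c,i}^{t,j} = \sum_j(\vec{\hat{g}}_{c,i}^{t,j}-\nabla_c\hat{F}_i(\vec{\tilde{x}}_i^{t,j})) + \sum_j\nabla_c\hat{F}_i(\vec{\tilde{x}}_i^{t,j})$ and use the orthogonality of the martingale differences so that $\Expsub{t}{\norm{\sum_j(\vec{\hat{g}}_{c,i}^{t,j}-\nabla_c\hat{F}_i(\vec{\tilde{x}}_i^{t,j}))}^2}\le k\hat{\sigma}_L^2$ rather than $k^2\hat{\sigma}_L^2$, applying Cauchy--Schwarz only to the deterministic part. (A weaker lemma with $K^2\eta_L^2\hat{\sigma}_L^2$ would still give the $\mathcal{O}(1/\sqrt{T})$ rate downstream with $\eta_L=\mathcal{O}(1/\sqrt{T})$, but it is not the statement you were asked to prove.)
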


% ------------------------------------------------------------------------------
For the following, we denote the expectation conditioned on all randomness up to
the $\nth{t}$ step as $\Expsub{t}{\cdot}$. First, from
\asmref{smooth}\ref{itm:gci}, and the fact that $\vec{x} := (\vec{x}_c,
\vec{x}_s)$, we get
\begin{align}
    \Expsub{t}{f(\vec{x}^{t+1})} &\leq
            f(\vec{x}^t)
        +   \underbrace{\iprod{\nabla_c f(\vec{x}^t)}{\Expsub{t}{\vec{x}_c^{t+1} -
            \vec{x}_c^t}}
        +   \frac{L_c}{2} \Expsub{t}{\norm{\vec{x}_c^{t+1} -
            \vec{x}_c^t}^2}}_{\triangleq~\mathcal{C}}
    \nonumber \\
        &\qquad~+ \underbrace{\iprod{\nabla_s f(\vec{x}^t)}{
            \Expsub{t}{\vec{x}_s^{t+1} - \vec{x}_s^t}}
        +   \frac{L_c}{2} \Expsub{t}{\norm{\vec{x}_s^{t+1} - \vec{x}_s^t}^2}
            }_{\triangleq~\mathcal{S}}
    \label{eq:objnet}
\end{align}
Hereafter, the proof is divided into two parts: the client-side bound
($\mathcal{C}$) and the server-side bound ($\mathcal{S}$).

\begin{enumerate}[leftmargin=*]

\item \textbf{Client-side bound}:
At the $\nth{k}$ iteration of the $\nth{t}$ round, the net update for client $i$
takes the form:
\begin{equation}
    \vec{x}_{c,i}^{t,k} = \vec{x}_{c,i}^{t,k-1} - \eta_L
        \vec{\hat{g}}_{c,i}^{t,k-1}
    %\nonumber \\
    %    &= \vec{x}_{c,i}^{t,k-2} - \eta_L (\vec{\hat{g}}_{c,i}^{t,k-1} +
    %    \vec{\hat{g}}_{c,i}^{t,k-2})
    %\nonumber \\
    = \vec{x}_{c,i}^{t,0} - \eta_L \sum_{j=0}^{k-1} \vec{\hat{g}}_{c,i}^{t,j}.
    \label{eq:clientiiter}
\end{equation}
Together with the model averaging at server $F$, the client-side model update in one
communication round is given by
\begin{equation}
    \vec{x}_{c}^{t+1} = \frac{1}{m} \sum_{i=1}^m \vec{x}_{c,i}^{t,K}
        = \frac{1}{m} \sum_{i=1}^m \left[ \vec{x}_{c,i}^{t,0} -
            \eta_L \sum_{k=0}^{K-1} \vec{\hat{g}}_{c,i}^{t,k} \right]
        = \vec{x}_c^{t} - \frac{\eta_L}{m} \sum_{i=1}^m
            \sum_{k=0}^{K-1} \vec{\hat{g}}_{c,i}^{t,k}.
    \label{eq:clientiter}
\end{equation}

We first bound the quantity $\mathcal{C}$ in \eqref{objnet}:
\begin{equation}
    \mathcal{C} = \underbrace{\iprod{\nabla_c
            f(\vec{x}^t)}{-\frac{\eta_L}{m} \Expsub{t}{\sum_{i=1}^m
            \sum_{k=0}^{K-1} \vec{\hat{g}}_{c,i}^{t,k}}}}_{\triangleq A}
        +   \underbrace{
                \frac{\eta_L^2 L_c}{2} \Expsub{t}{\norm{\frac{1}{m}
                \sum_{i=1}^m \sum_{k=0}^{K-1} \vec{\hat{g}}_{c,i}^{t,k}}^2}
            }_{\triangleq B}
    \label{eq:AandB}
\end{equation}
where we get \eqref{AandB} from \eqref{clientiter}.  For the sequel, we
define $\vec{\tilde{x}}_i := (\vec{x}_{c,i}, \vec{x}_{a,i})$ as the set of
parameters at client $i$. Then, we can bound the expression $A$ in
\eqref{AandB} as follows.
\begin{align}
    A &= \iprod{\nabla_c f(\vec{x}^t)}{-\frac{\eta_L}{m}\Expsub{t}{
            K m \nabla_c f(\vec{x}^t) + \sum_{i=1}^m \sum_{k=0}^{K-1}
            \nabla_c \hat{F}_i(\vec{\tilde{x}}_i^{t,k}) - K m \nabla_c
            f(\vec{x}^t)}}
    \nonumber \\
    &= -K \eta_L \norm{\nabla_c f(\vec{x}^t)}^2
        +   \iprod{\nabla_c f(\vec{x}^t)}{-\frac{\eta_L}{m} \sum_{i=1}^m
            \sum_{k=0}^{K-1} \Expsub{t}{\nabla_c
            \hat{F}_i(\vec{\tilde{x}}_i^{t,k}) - \nabla_c F_i(\vec{x}^t)
            }}
    \nonumber \\
    &= -K\eta_L \norm{\nabla_c f(\vec{x}^t)}^2
    \nonumber \\
    &\qquad~+ \Expsub{t}{\iprod{\sqrt{K\eta_L} \nabla_c f(\vec{x}^t)}
            {-\frac{\sqrt{\eta_L}}{m\sqrt{K}} \sum_{i=1}^m
            \sum_{k=0}^{K-1} \left\{\nabla_c \hat{F}_i(\vec{\tilde{x}}_i^{t,k}) -
            \nabla_c F_i(\vec{x}^t) \right\}}}
\end{align}
Using the property $\iprod{a}{b} = 1/2\norm{a}^2 + 1/2\norm{b}^2 -
1/2\norm{a-b}^2$, we can rewrite $A$ as
\begin{align}
    A &= -\frac{K \eta_L}{2} \norm{\nabla_c f(\vec{x}^t)}^2
        +   \frac{\eta_L}{2m^2 K} \Expsub{t}{\norm{\sum_{i=1}^m
            \sum_{k=0}^{K-1} \left\{ \nabla_c \hat{F}_i(\vec{\tilde{x}}_i^{t,k})
            - \nabla_c F_i(\vec{x}^t) \right\} }^2}
        \nonumber \\
        &\qquad- \frac{\eta_L}{2m^2 K} \Expsub{t}{\norm{m K \nabla_c f(\vec{x}^t) +
            \sum_{i=1}^m \sum_{k=0}^{K-1} \left\{\nabla_c
            \hat{F}_i(\vec{\tilde{x}}_i^{t,k}) - \nabla_c F_i(\vec{x}^t)
            \right\}}^2}
    \nonumber \\
        &= -\frac{K \eta_L}{2} \norm{\nabla_c f(\vec{x}^t)}^2
        +   \frac{\eta_L}{2m^2 K} \Expsub{t}{\norm{\sum_{i=1}^m
            \sum_{k=0}^{K-1} \left\{ \nabla_c \hat{F}_i(\vec{\tilde{x}}_i^{t,k})
            - \nabla_c F_i(\vec{x}^t) \right\} }^2}
        \nonumber \\
        &\qquad- \frac{\eta_L}{2m^2 K} \Expsub{t}{\norm{\sum_{i=1}^m \sum_{k=0}^{K-1}
            \nabla_c \hat{F}_i(\vec{\tilde{x}}_i^{t,k})}^2}
    \nonumber \\
        &\leq -\frac{K \eta_L}{2} \norm{\nabla_c f(\vec{x}^t)}^2
        +   \frac{\eta_L}{2 m} \sum_{i=1}^m \sum_{k=0}^{K-1}
            \underbrace{\Expsub{t}{\norm{\nabla_c \hat{F}_i(\vec{\tilde{x}}_i^{t,k})
            - \nabla_c F_i(\vec{x}^t) }^2}}_{\triangleq A_1}
        \nonumber \\
        &\qquad-   \frac{\eta_L}{2 m^2 K} \Expsub{t}{\norm{\sum_{i=1}^m
            \sum_{k=0}^{K-1} \nabla_c
            \hat{F}_i(\vec{\tilde{x}}_i^{t,k})}^2}
    \label{eq:A}
\end{align}
% ------------------------------------------------------------------------------
%For the following lemma, we introduce some dataset specific terms: the
%\note{diameter of the dataset}, $\varphi := \max_{\vec{\xi},\vec{\kappa} \in
%D_i} ||\vec{\xi} - \vec{\kappa}||$, and the sample covariance matrices for
%client $i$ at round $t$ and local iteration $k$ with $n
%:= \floor{\frac{t}{l}}$:
%\begin{align}
%    \vec{S}_{\vec{\xi}\vec{\xi}} := \frac{1}{n} \sum_{j=1}^n (\vec{\xi}_i^{jl,0}
%        - \vec{\overline{\xi}})(\vec{\xi}_i^{jl,0} - \vec{\overline{\xi}})^T, 
%    &~~\vec{S}_{\vec{z}\vec{\xi}} := \frac{1}{n} \sum_{j=1}^n
%        (\vec{z}(\vec{\xi}_i^{jl,0}) -
%        \vec{\overline{z}})(\vec{\xi}_i^{jl,0} - \vec{\overline{\xi}})^T
%    \nonumber \\
%    \text{with}\quad \vec{\overline{\xi}} := \frac{1}{n} \sum_{j=1}^{n}
%        \vec{\xi}_i^{jl,0}, &~~\vec{\overline{z}} := \frac{1}{n} \sum_{j=1}^{n}
%        \vec{z}(\vec{\xi}_i^{jl,0}).
%\end{align}
% ------------------------------------------------------------------------------
Next, we bound $A_1$ in \eqref{A} as follows:
\begin{align}
    A_1 &= \Expsub{t}{\norm{\nabla_c \hat{F}_i(\vec{x}_i^{t,k}) - \nabla_c
            F_i(\vec{x}^t) }^2}
    \nonumber \\
        &= \Expsub{t}{\norm{
            \nabla_c \hat{F}_i(\vec{\tilde{x}}_i^{t,k})
            - \nabla_c \hat{F}_i(\vec{\tilde{x}}_i^{t,0})
            + \nabla_c \hat{F}_i(\vec{\tilde{x}}_i^{t,0})
            - \nabla_c F_i(\vec{x}^t)
        }^2}
    \nonumber \\
        &\leq 2\Expsub{t}{ \norm{
            \nabla_c \hat{F}_i(\vec{\tilde{x}}_i^{t,k})
            - \nabla_c \hat{F}_i(\vec{\tilde{x}}_i^{t,0})
        }^2}
        + 2\Expsub{t}{ \norm{
            \nabla_c \hat{F}_i(\vec{\tilde{x}}_i^{t,0})
            - \nabla_c F_i(\vec{x}^t)
        }^2}
    \label{eq:A11} \\
        &\leq 2 \hat{L}_c^2 \Expsub{t}{\norm{
            \vec{x}_{c,i}^{t,k} - \vec{x}_c^t}^2}
        + 2 \norm{ \nabla_c \hat{F}_i(\vec{\tilde{x}}_i^t)
            - \nabla_c F_i(\vec{x}^t)
        }^2
    \label{eq:A1}
    %\\
    %    &\leq 2 \hat{L}_c^2 \Expsub{t}{\norm{\vec{x}_{c,i}^{t,k} - \vec{x}_c^t}^2}
    %    +   2 L_{zf}^2 \varphi^2 \left( 
    %            \norm{\vec{S}_{ibf}(n) \vec{S}_{if}(n)^{-1}}^2 + L_{zb}^2
    %        \right)
    %\label{eq:A1}
\end{align}
where, $n := \floor{t/l}$ in \eqref{A11} we use the property $\norm{\sum_{i=1}^n
\vec{a}_i}^2 \leq n \sum_{i=1}^n \norm{\vec{a}_i}^2$ with $n=2$, in \eqref{A1}
we use \asmref{smooth}\ref{itm:ghatci}.
%and in \eqref{A1}, since $t > T'$ we can
%use \corref{gradesterr} with the sample covariance matrices defined in
%\eqref{Sdef}.
Next, bounding the quantity $B$ in \eqref{AandB}:
\begin{align}
    B &= \frac{\eta_L^2 L_c}{2} \Expsub{t}{\norm{\frac{1}{m} \sum_{i=1}^m
        \sum_{k=0}^{K-1} \vec{\hat{g}}_{c,i}^{t,k}}^2}
    \nonumber \\
    &= \frac{\eta_L^2 L_c}{2} \Expsub{t}{\norm{
            \frac{1}{m} \sum_{i=1}^m \sum_{k=0}^{K-1} \left\{
                \vec{\hat{g}}_{c,i}^{t,k} - \nabla_c
                \hat{F}_i(\vec{\tilde{x}}_i^{t,k})
            \right\}
        }^2} + \frac{\eta_L^2 L_c}{2 m^2} 
        \Expsub{t}{\norm{\sum_{i=1}^m \sum_{k=0}^{K-1} \nabla_c
            \hat{F}_i(\vec{\tilde{x}}_i^{t,k}) }^2}
    \label{eq:B1} \\
    &\leq \frac{\eta_L^2 K \hat{\sigma}_L^2 L_c}{2 m}
        + \frac{\eta_L^2 L_c}{2 m^2} \Expsub{t}{\norm{\sum_{i=1}^m
        \sum_{k=0}^{K-1} \nabla_c \hat{F}_i(\vec{\tilde{x}}_i^{t,k}) }^2}
    \label{eq:B}
\end{align}
where \eqref{B1} follows from $\Exp{\norm{x}^2} = \Exp{\norm{x - \Exp{x}}^2} +
\norm{\Exp{x}}^2$, and \eqref{B} from \asmref{varbound}\ref{itm:clocal}.
Substituting \eqref{A1} in \eqref{A}, and then \eqref{A} and \eqref{B} in
\eqref{AandB} we get the following.
%Note that we use the notation $\Gamma(n)^2 :=
%\frac{1}{m} \sum_{i=1}^m \norm{\vec{S}_{ibf}(n) \vec{S}_{if}(n)^{-1}}^2$:
\begin{align}
    \mathcal{C} &= A + B
    \nonumber \\
    &\leq -\frac{K\eta_L}{2} \norm{\nabla_c f(\vec{x}^t)}^2
        + \frac{\eta_L^2 K \hat{\sigma}_L^2 L_c}{2 m}
    \nonumber \\
    &\qquad~
        + \frac{\eta_L}{m} \sum_{i=1}^{m} \sum_{k=0}^{K-1} \left[
            \hat{L}_c^2 \Exp{%\left\{
                \norm{\vec{x}_{c,i}^{t,k} - \vec{x}_c^t}^2
            }%\right\}
        +   \norm{
                \nabla_c \hat{F}_i(\vec{\tilde{x}}_i^t)
                - \nabla_c F_i(\vec{x}^t)
            }^2
        %+   2 L_{zf}^2 \varphi^2 \left( 
        %        \norm{\vec{S}_{ibf}(n) \vec{S}_{if}(n)^{-1}}^2 + L_{zb}^2
        %    \right)
        \right]
    \nonumber \\
    &\qquad~
        - \frac{\eta_L}{2 m^2 K} \left( 1 - \eta_L L_c K \right)
        \Expsub{t}{\norm{\sum_{i=1}^m \sum_{k=0}^{K-1} \nabla_c
        \hat{F}_i(\vec{\tilde{x}}_i^{t,k}) }^2}
    \label{eq:calC1} \\
    &\leq -\frac{K\eta_L}{2} \norm{\nabla_c f(\vec{x}^t)}^2
        + \frac{\eta_L^2 K \hat{\sigma}_L^2 L_c}{2 m}
        + \eta_L \hat{L}_c^2 \sum_{k=0}^{K-1} \left\{
            \frac{1}{m} \sum_{i=1}^{m} \Exp{
                \norm{\vec{x}_{c,i}^{t,k} - \vec{x}_c^t}^2
        }\right\}
    \nonumber \\
    &\qquad~+ \frac{\eta_L K}{m} \sum_{i=1}^m \norm{
                \nabla_c \hat{F}_i(\vec{\tilde{x}}_i^t)
                - \nabla_c F_i(\vec{x}^t) 
            }^2
    %L_{zf}^2 \phi^2 \left(\Gamma(n)^2 + L_{zb}^2 \right)
    \label{eq:calC2}
\end{align}
where we use the fact that the last term in \eqref{calC1} is strictly negative
for $\eta_L < \frac{1}{K L_c}$ to get \eqref{calC2}. In \eqref{calC2}, using
\lemref{driftbound} and rearranging, we get:
\begin{align}
    \mathcal{C}
    &\leq -\frac{K\eta_L}{2} \norm{\nabla_c f(\vec{x}^t)}^2
        + \frac{\eta_L^2 K \hat{\sigma}_L^2 L_c}{2 m}
        %+ \eta_L K L_{zf}^2 \varphi^2 (\Gamma(n)^2 + L_{zb}^2)
        + \frac{\eta_L K}{m} \sum_{i=1}^m \norm{
                \nabla_c \hat{F}_i(\vec{\tilde{x}}_i^t)
                - \nabla_c F_i(\vec{x}^t)
            }^2
    \nonumber \\
    &\qquad~+   20 K^2 \eta_L^3 \hat{L}_c^2
        \biggl[
                (\hat{\sigma}_L^2 + K \sigma_G^2)
            + K \norm{\nabla_c f(\vec{x}^t)}^2
            + \frac{K}{m} \sum_{i=1}^m \norm{
                \nabla_c \hat{F}_i(\vec{\tilde{x}}_i^t)
                - \nabla_c F_i(\vec{x}^t)}^2
        \biggr]
    %&\qquad~+   \eta_L \hat{L}_c^2 K \left[
    %            10K\eta_L^2 \left\{
    %                2 L_{zf}^2 \varphi^2 (\Gamma(n)^2 + L_{zb}^2) + \hat{\sigma}_L^2 + 3K
    %                \sigma_G^2
    %            \right\}
    %        +   30K^2\eta_L^2 \norm{\nabla f(\vec{x}_c^t)}^2
    %        \right]
    \nonumber \\
        &= -K\eta_L \left(\frac{1}{2} - 20 K^2 \eta_L^2 \hat{L}_c^2 \right)
                \norm{\nabla_c f(\vec{x}^t)}^2 
            + \frac{\eta_L^2 K \hat{\sigma}_L^2 L_c}{2 m}
            + 20 K^2 \eta_L^3 \hat{L}_c^2 (\hat{\sigma}_L^2 + K \sigma_G^2)
            \nonumber \\
            &\qquad~+ \frac{K \eta_L}{m} (1 + 20 K^2 \eta_L^2 \hat{L}_c^2)
                \sum_{i=1}^m \norm{ \nabla_c \hat{F}_i(\vec{\tilde{x}}_i^t) -
                \nabla_c F_i(\vec{x}^t) }^2
    %    &= -   K\eta_L \left(\frac{1}{2} - 30 K^2 \eta_L^2 \hat{L}_c^2 \right)
    %            \norm{\nabla_c f(\vec{x}^t)}^2 
    %        +   \eta_L K L_{zf}^2 \varphi^2 (\Gamma(n)^2 + L_{zb}^2)
    %            (1 + 20 K \eta_L^2 \hat{L}_c^2)
    %    \nonumber \\
    %    &\qquad~+ \eta_L K \left( \frac{\eta_L K L_c}{2 m} + 10 K \eta_L^2
    %            \hat{L}_c^2 \right) \hat{\sigma}_L^2
    %        + 30 \eta_L^3 K^3 \hat{L}_c^2 \sigma_G^2
    \nonumber \\
        &\leq -   c K \eta_L \norm{\nabla_c f(\vec{x}^t)}^2 
            + \frac{\eta_L^2 K \hat{\sigma}_L^2 L_c}{2 m}
            + 20 K^2 \eta_L^3 \hat{L}_c^2 (\hat{\sigma}_L^2 + K \sigma_G^2)
            \nonumber \\
            &\qquad~+ \frac{3 K \eta_L}{2 m} 
                \sum_{i=1}^m \norm{ \nabla_c \hat{F}_i(\vec{\tilde{x}}_i^t) -
                \nabla_c F_i(\vec{x}^t) }^2
        %    +   \eta_L K L_{zf}^2 \varphi^2 (\Gamma(n)^2 + L_{zb}^2)
        %        (1 + 20 K \eta_L^2 \hat{L}_c^2)
        %\nonumber \\
        %&\qquad~+ \eta_L K \left( \frac{\eta_L K L_c}{2 m} + 10 K \eta_L^2
        %        \hat{L}_c^2 \right) \hat{\sigma}_L^2
        %    + 30 \eta_L^3 K^3 \hat{L}_c^2 \sigma_G^2
    \label{eq:fxct}
\end{align}
We get \eqref{fxct} because, $\exists~c > 0$ such that $(0.5 - 20 K^2 \eta_L^2
\hat{L}_c^2) > c$ provided $\eta_L < \frac{1}{2\sqrt{10} K \hat{L}_c}$, and by
reusing the bound on $\eta_L$ in the last term as well.  Rewriting \eqref{fxct}
in terms of $\Phi_1(\eta_L) := \frac{1}{c} \left[ \frac{\eta_L \hat{\sigma}_L^2
L_c}{2 m} + 20 K \eta_L^2 \hat{L}_c^2 (\hat{\sigma}_L^2 + K \sigma_G^2) \right]$
and the estimation error metric $\varepsilon^t$ in \eqref{varepsilon}, we get:
\begin{equation}
    \mathcal{C} \leq c K \eta_L \left(
        -\norm{\nabla_c f(\vec{x}^t)}^2 + \frac{3}{2 c} \varepsilon^t
        + \Phi_1(\eta_L)
    \right).
    \label{eq:clientbound1}
\end{equation}

\item \textbf{Server-side bound}: The clients transmit the smashed data to the
$S$-server once every $K/Q$ local iterations, thus the server-side model is updated
$Q$ times per round per client.  The update equation for the server-side model
is then:
\begin{equation}
    \vec{x}_s^{t+1} = \vec{x}_s^t - \eta \sum_{i=1}^m \sum_{q=0}^{Q-1}
        \vec{g}_{s,i}^{t,q}
    \label{eq:servers}
\end{equation}
where the second superscript on $\vec{g}_{s,i}^{t,q}$ indicates the $q^{th}$
server update, and is different from the local client iteration index $k$.
Then, the server-side bound $\mathcal{S}$ over one communication round in
\eqref{objnet} can be written as follows:
\begin{align}
    \mathcal{S} &= \iprod{\nabla_s f(\vec{x}^t)}{
            \Expsub{t}{\vec{x}_s^{t+1} - \vec{x}_s^t}}
            +   \frac{L_c}{2} \Expsub{t}{\norm{\vec{x}_s^{t+1} - \vec{x}_s^t}^2}
    \nonumber \\
        &= \Expsub{t}{\iprod{\nabla_s f(\vec{x}^t)}{
            -\eta \sum_{i=1}^m \sum_{q=0}^{Q-1} \vec{g}_{s,i}^{t,q}}}
            + \frac{\eta^2 L_c}{2} \Expsub{t}{\norm{\sum_{i=1}^m
            \sum_{q=0}^{Q-1} \vec{g}_{s,i}^{t,q}}^2}
    \label{eq:serverbound1}
\end{align}
Equation \eqref{serverbound1} is identical to \eqref{AandB} with the
substitutions $\eta_L/m \to \eta$, $\vec{\hat{g}}_{c,i}^{t,k} \to
\vec{g}_{s,i}^{t,q}$, $\nabla_c \to \nabla_s$ and $K \to Q$.  Following
the same steps as in the client-side bound we can say that under Assumptions
\ref{asm:smooth}\ref{itm:gci} and \ref{asm:varbound}\ref{itm:slocal}, and
provided $\eta \leq \frac{1}{4\sqrt{2} Q L_c}$, $\mathcal{S}$ can be bounded as
\begin{equation}
    \mathcal{S} \leq c' m Q \eta \left(
        -\norm{\nabla_s f(\vec{x}^t)}^2 + \Phi_2(\eta)
    \right)
    \label{eq:serverbound2}
\end{equation}
with $\Phi_2(\eta) := \frac{1}{c'} \left[ \frac{\eta \sigma_L^2 L_c}{2} + 16 Q
\eta^2 L_c^2 (\sigma_L^2 + Q \sigma_G^2) \right]$.
\end{enumerate}

Finally, we can substitute \eqref{clientbound1} and \eqref{serverbound2} back in
\eqref{objnet} to get the following:
\begin{equation*}
    \Expsub{t}{f(\vec{x}^{t+1})}
        \leq f(\vec{x}^t) + c K \eta_L \left(
                - \norm{\nabla_c f(\vec{x}^t)}^2
                + \frac{3}{2c} \varepsilon^t + \Phi_1(\eta_L)
            \right) + c' m Q \eta \left(
                - \norm{\nabla_s f(\vec{x}^t)}^2 + \Phi_2(\eta)
            \right)
\end{equation*}
which can be rearranged to get:
\begin{equation*}
    cK\eta_L \norm{\nabla_c f(\vec{x}^t)}^2
        + c' m Q \eta \norm{\nabla_s f(\vec{x})^t}^2
    \leq f(\vec{x}^t) - \Expsub{t}{f(\vec{x}^{t+1})}
        + \frac{3 K \eta_L}{2} \varepsilon^t
        + c K \eta_L \Phi_1(\eta_L) + c' Q m \eta \Phi_2(\eta)
\end{equation*}
Next, by redefining $c = \min\{c, c'\}$ and $c' = \max\{c, c'\}$.  After further
rearranging we get:
\begin{align*}
    c Q \min\{ \eta_L, m \eta \} \norm{\nabla f(\vec{x}^t)}^2
        &\leq f(\vec{x}^t) - \Expsub{t}{f(\vec{x}^{t+1})}
            + \frac{3 K \eta_L}{2} \varepsilon^t
            + c' K \left[ \eta_L \Phi_1(\eta_L) + m \eta \Phi_2(\eta) \right]
    \\
    \implies 
    \norm{\nabla f(\vec{x}^t)}^2
        &\leq \frac{f(\vec{x}^t) - \Expsub{t}{
                f(\vec{x}^{t+1})}}{c Q \min\{\eta_L, m \eta\}
            } + \frac{3 K \eta_L}{2 c Q \min\{\eta_L, m \eta\}} \varepsilon^t
            + \frac{c' K}{c Q} \left(
                \frac{\eta_L \Phi_1(\eta_L) + m \eta \Phi_2(\eta)}{
                    \min\{\eta_L, m \eta\}
                }
            \right)
\end{align*}
Next, taking full expectation on both sides, and summing over $t = 1, 2, \dots,
T$ and recognizing that minimum is lesser than the average, we get the final bound:
\begin{equation}
    \min_{t \in [T]} \Exp{\norm{\nabla f(\vec{x}^t)}^2}
        \leq \frac{f(\vec{x}^0) - f^*}{c \min{\{\eta_L, m \eta\}} Q T}
        + \frac{3 K \eta_L}{2 c Q \min\{\eta_L, m \eta\}} \frac{1}{T} \sum_{t=1}^T
        \varepsilon^t + \frac{\tilde{\Phi}(\eta_L, \eta)}{T}
    \label{eq:finalbd1}
\end{equation}
where $\tilde{\Phi}(\eta_L, \eta)$ is given by:
\begin{align}
    \tilde{\Phi}(\eta_L, \eta) &:= \frac{c' K}{c Q} . \frac{
        \frac{\eta_L}{c} \left[
            \frac{\eta_L \hat{\sigma}_L^2 L_c}{2 m}
            + 20 K \eta_L^2 \hat{L}_c^2 (\hat{\sigma}_L^2 + K \sigma_G^2)
        \right] + \frac{m \eta}{c'} \left[
            \frac{\eta \sigma_L^2 L_c}{2}
            + 16 Q \eta^2 L_c^2 (\sigma_L^2 + Q \sigma_G^2)
        \right] }{
            \min\{\eta_L, m \eta\}
        }
    \nonumber \\
    &\leq  c' K . \frac{
        \frac{\eta_L^2 \hat{\sigma}_L^2 L_c}{2 m}
        + 20 K \eta_L^3 \hat{L}_c^2 (\hat{\sigma}_L^2 + K \sigma_G^2)
        + \frac{m \eta^2 \sigma_L^2 L_c}{2}
        + 16 Q m \eta^3 L_c^2 (\sigma_L^2 + Q \sigma_G^2)
    }{c^2 Q \min\{\eta_L, m \eta\}} =: \Phi(\eta_L, \eta)
\end{align}
Thus, substituting $\Phi(\eta_L, \eta)$ instead of $\tilde{\Phi}(\eta_L, \eta)$
in \eqref{finalbd1} we get the statement of the theorem.
\qed

\subsection{Proof of \thmref{pacbound} and \corref{pacboundlazy}} \label{app:pacboundproof}
The statement of the theorem follows by further bounding the estimator term
$\frac{1}{T} \sum_{t=0}^{T-1} \varepsilon^t$ in the RHS in
\thmref{convgnonlazy}. There, we obtained an upper bound on the stationarity
gap, i.e., $\min_t \norm{\nabla f(\vec{x}_t)}^2$, which contained a term
involving
\begin{equation}
    \varepsilon^t := \frac{1}{m} \sum_{i=1}^{m} \Exp{
    \norm{\nabla_c \hat{F}_i(\vec{\tilde{x}}_i^t) - \nabla_c
    F_i(\vec{x}^t)}^2}
\end{equation}
where the gradients $\nabla_c \hat{F}_i(\vec{\tilde{x}}_i^t)$ and $\nabla_c
F_i(\vec{x}_i^t)$ are written in terms of the cut-layer activations as:
\begin{align}
    \nabla_c \hat{F}_i(\vec{\tilde{x}}^t) &:= 
        \Expsub{t}{
            \vec{J}_i(\vec{x}_c^t; \vec{\xi}_i^t)~.~
            \vec{\hat{z}}_{i,b}(\vec{x}_{a,i}^t; \vec{z}_{i,f}^t, y_i^t)
        }
    \\
    \nabla_c F_i(\vec{x}^t) &:= 
        \Expsub{t}{
            \vec{J}_i(\vec{x}_c^t; \vec{\xi}_i)~.~
            \vec{z}_{i,b}(\vec{x}_s^t; \vec{z}_{i,f}^t, y_i^t)
        }
\end{align}
where $\Expsub{t}{\cdot}$ refers to expectation with respect to the randomness at
iteration $t$.  We can then bound $\varepsilon^t$ as follows:
\begin{align}
    \varepsilon_t &= \frac{1}{m} \sum_{i=1}^{m} \Exp{\norm{\nabla_c
        \hat{F}_i(\vec{\tilde{x}}_i^t) - \nabla_c F_i(\vec{x}^t)}^2}
    \nonumber \\
    &= \frac{1}{m} \sum_{i=1}^{m} \Exp{
        \norm{
            \Expsub{t}{
                \vec{J}_i(\vec{x}_c^t; \vec{\xi}_i^t)~.~ \left(
                \vec{\hat{z}}_{i,b}(\vec{x}_{a,i}^t; \vec{z}_{i,f}^t, y_i^t)
                - \vec{z}_{i,b}(\vec{x}_s^t; \vec{z}_{i,f}^t, y_i^t) \right)
            }
        }^2}
    \nonumber \\
    &\leq \frac{1}{m} \sum_{i=1}^{m} \Exp{
        \norm{ \vec{J}_i(\vec{x}_c^t; \vec{\xi}_i^t)~.~ \left(
            \vec{\hat{z}}_{i,b}(\vec{x}_{a,i}^t; \vec{z}_{i,f}^t, y_i^t)
            - \vec{z}_{i,b}(\vec{x}_s^t; \vec{z}_{i,f}^t, y_i^t)
        \right)}^2}
    \label{eq:jensen} \\
    &\leq \frac{1}{m} \sum_{i=1}^{m} \Exp{
        \norm{ \vec{J}_i(\vec{x}_c^t; \vec{\xi}_i^t)}^2~.~
        \norm{
            \vec{\hat{z}}_{i,b}(\vec{x}_{a,i}^t; \vec{z}_{i,f}^t, y_i^t)
            - \vec{z}_{i,b}(\vec{x}_s^t; \vec{z}_{i,f}^t, y_i^t)
        }^2}
    \label{eq:specnorm} \\
    &\leq \frac{L_f^2}{m} \sum_{i=1}^m \Exp{
        \norm{
            \vec{\hat{z}}_{i,b}(\vec{x}_{a,i}^t; \vec{z}_{i,f}^t, y_i^t)
            - \vec{z}_{i,b}(\vec{x}_s^t; \vec{z}_{i,f}^t, y_i^t)
        }^2}
    \label{eq:estimbd1} \\
    &= \frac{L_f^2}{m} \sum_{i=1}^m \mathcal{L}_i(\vec{x}_{a,i}^t, \vec{x}^t)
    \nonumber
\end{align}
where we use Jensen's inequality in \eqref{jensen} and the spectral norm of
$\vec{J}_i(\cdot; \cdot)$ in \eqref{specnorm}. In \eqref{estimbd1} we use
\asmref{lipschitz} and the fact that the spectral norm of the Jacobian is
bounded by the Lipschitz constant.

Note that $r_i(\epsilon) = \bigO{\sfrac{1}{\epsilon^2}}$ implies that
$\epsilon = \bigO{\sfrac{1}{\sqrt{r}}} = \bigO{\sfrac{1}{\sqrt{t}}}$. Now, from
\eqref{pacbound}, we have:
\begin{align}
    \varepsilon_t &\leq \underbrace{\frac{L_f^2}{m} \sum_{i=1}^m
        \mathcal{L}_i(\vec{x}_{a,i}^{t\star}, \vec{x}^t)}_{L_f^2
        \varepsilon_{\star}^t}
        + \frac{C_1}{\sqrt{t}}
    \label{eq:estimbd2}
\end{align}
for some $C_1 > 0$.  Note that $\varepsilon_{\star}^t$ is the lowest achievable
error rate by the hypothesis class $\mathcal{A}_i$ at time $t$ and is not
reducible further.
Finally, substituting \eqref{estimbd2} back into \eqref{convgasymp} and using
$\sfrac{1}{T} \sum_{t=1}^{T} \sfrac{1}{\sqrt{t}} \leq \sfrac{2}{\sqrt{T}}$ from
\lemref{series}, we get the desired result in \thmref{pacbound}.

For the proof of \corref{pacboundlazy}, the only difference is that we need to
bound the round average of $\Exp{\varepsilon_t}$ as follows:
\begin{align*}
    \frac{1}{T} \sum_{t=0}^{T-1} \Exp{\varepsilon_t}
        &\leq \frac{1}{T} \sum_{t=0}^{T-1} \left[
            L_f^2 \varepsilon_{\star}^t + \frac{C_1}{\sqrt{t}}
        \right]
    \\
        &= \frac{L_F^2}{T} \sum_{t=0}^{T-1}\varepsilon_{\star}^t
            + \frac{C_1}{T} \sum_{t=0}^{T-1} \frac{1}{\sqrt{t}}
    \\
        &= \frac{L_F^2}{T} \sum_{t=0}^{T-1}\varepsilon_{\star}^t
            + C_1 \left\{
                \frac{1}{T} \sum_{t=0}^{T'-1} \frac{1}{\sqrt{t}}
                + \frac{1}{T} \sum_{t=T'}^{T-1} \frac{1}{\sqrt{T'}}
            \right\}
    \\
        &\leq \frac{L_F^2}{T} \sum_{t=0}^{T-1}\varepsilon_{\star}^t
            + C_1 \left\{
                \frac{2 T'/T}{\sqrt{T'}}
                + \frac{(1 - T'/T)}{\sqrt{T'}}
            \right\}
    \\
        &= \frac{L_F^2}{T} \sum_{t=0}^{T-1}\varepsilon_{\star}^t
            + C_1 \frac{(1 + T'/T)}{\sqrt{T'}}
\end{align*}
for some $C_1 > 0$. Note that when $T' = T$, alignment happens till the last
round, and this recovers the bound in \thmref{pacbound}.  On the other hand when
$T' = 0$, no alignment occurs, and this makes the upper bound $\to \infty$.
\qed

\subsection{Proof of \lemref{driftbound}}  \label{app:driftboundproof}
The proof is along similar lines as \cite{Reddi:20:ICLR} and proceeds as
follows.  We start with the iterate:
\begin{align}
    \Expsub{t}{\norm{\vec{x}_{c,i}^{t,k} - \vec{x}_c^t}^2}
        &= \Expsub{t}{\norm{\vec{x}_{c,i}^{t,k-1} - \vec{x}_c^t -
            \eta_L \vec{\hat{g}}_{c,i}^{t,k-1}}^2}
    \nonumber \\
        &= \mathbb{E}_t\Biggl[\biggl\lVert\vec{x}_{c,i}^{t,k-1} - \vec{x}_c^t -
            \eta_L \Bigl\{
                \vec{\hat{g}}_{c,i}^{t,k-1}
                - \nabla_c \hat{F}_i(\vec{\tilde{x}}_{i}^{t,k-1})
                + \nabla_c \hat{F}_i(\vec{\tilde{x}}_{i}^{t,k-1})
                - \nabla_c \hat{F}_i(\vec{\tilde{x}}_i^t)
    \nonumber \\
        &\qquad + \nabla_c \hat{F}_i(\vec{\tilde{x}}_i^t)
                - \nabla_c F_i(\vec{x}^t) + \nabla_c F_i(\vec{x}^t)
                - \nabla_c f(\vec{x}^t) + \nabla_c f(\vec{x}^t)
            \Bigr\}\biggr\rVert^2\Biggr]
    \nonumber \\
        &\leq \Expsub{t}{\norm{\vec{x}_{c,i}^{t,k-1} - \vec{x}_c^t}^2}
            + 5 \eta_L^2 \Expsub{t}{\norm{\vec{\hat{g}}_{c,i}^{t,k-1}
                - \nabla_c \hat{F}_i(\vec{\tilde{x}}_{i}^{t,k-1})}^2}
    \nonumber \\
        &\qquad~+ 5 \eta_L^2 \Expsub{t}{\norm{\nabla_c
                \hat{F}_i(\vec{\tilde{x}}_{i}^{t,k-1}) - \nabla_c
                \hat{F}_i(\vec{\tilde{x}}_i^t)}^2}
            + 5 \eta_L^2 \norm{\nabla_c \hat{F}_i(\vec{\tilde{x}}_i^t)
                - \nabla_c F_i(\vec{x}^t)}^2
    \nonumber \\
        &\qquad~+ 5 \eta_L^2 \norm{\nabla_c F_i(\vec{x}^t)
                - \nabla_c f(\vec{x}^t)}^2
            + 5 \eta_L^2 \norm{\nabla_c f(\vec{x}^t)}^2
    \label{eq:normsplit}
\end{align}
where in \eqref{normsplit}, we use the property that $\norm{\sum_i
\vec{a}_i}^2 \leq n \sum_i \norm{\vec{a}_i^2}$.  Since $K > 1$, we can
continue bounding the expression on the right hand side as follows:
\begin{align}
    \Expsub{t}{\norm{\vec{x}_{c,i}^{t,k} - \vec{x}_c^t}^2}
        &\leq \left(1 + \frac{1}{2K-1}\right) \Expsub{t}{
            \norm{\vec{x}_{c,i}^{t,k-1} - \vec{x}_c^t}^2}
            + 5 \eta_L^2 \Expsub{t}{\norm{\vec{\hat{g}}_{c,i}^{t,k-1}
                - \nabla_c \hat{F}_i(\vec{\tilde{x}}_{i}^{t,k-1})}^2}
    \nonumber \\
        &\qquad~+ 5 K \eta_L^2 \Expsub{t}{\norm{\nabla_c
                \hat{F}_i(\vec{\tilde{x}}_{i}^{t,k-1}) - \nabla_c
                \hat{F}_i(\vec{\tilde{x}}_i^t)}^2}
            + 5 K \eta_L^2 \norm{\nabla_c \hat{F}_i(\vec{\tilde{x}}_i^t)
                - \nabla_c F_i(\vec{x}^t)}^2
    \nonumber \\
        &\qquad~+ 5 K \eta_L^2 \norm{\nabla_c F_i(\vec{x}^t)
                - \nabla_c f(\vec{x}^t)}^2
            + 5 K \eta_L^2 \norm{\nabla_c f(\vec{x}^t)}^2
    \nonumber \\
        &\leq \left(1 + \frac{1}{2K-1} + 5 K \eta_L^2\hat{L}_c^2\right)
                \Expsub{t}{\norm{\vec{x}_{c,i}^{t,k-1} - \vec{x}_c^t}^2}
            + 5 \eta_L^2 (\hat{\sigma}_L^2 + K \sigma_G^2)
    \nonumber \\
        &\qquad~+ 5 K \eta_L^2 \norm{\nabla_c \hat{F}_i(\vec{\tilde{x}}_i^t) -
                \nabla_c F_i(\vec{x}^t)}^2
            + 5 K \eta_L^2 \norm{\nabla_c f(\vec{x}^t)}^2
    \label{eq:driftbd1} \\
        &\leq \left( 1 + \frac{1}{K - 1}\right)
                \Expsub{t}{\norm{\vec{x}_{c,i}^{t,k-1} - \vec{x}_c^t}^2}
            + 5 \eta_L^2 (\hat{\sigma}_L^2 + K \sigma_G^2)
    \nonumber \\
        &\qquad~+ 5 K \eta_L^2 \norm{\nabla_c \hat{F}_i(\vec{\tilde{x}}_i^t) -
                \nabla_c F_i(\vec{x}^t)}^2
            + 5 K \eta_L^2 \norm{\nabla_c f(\vec{x}^t)}^2
    \label{eq:driftbd2}
\end{align}
where in \eqref{driftbd1}, we use Assumptions \ref{asm:varbound} and
\ref{asm:smooth}\ref{itm:ghatci}; and \eqref{driftbd2} holds when $\eta_L
\leq \frac{1}{\sqrt{10} K \hat{L}_c}$.  Next, averaging over $m$ clients and
unrolling the recursion we can obtain
\begin{align}
    \frac{1}{m} \sum_{i=1}^m \Expsub{t}{\norm{\vec{x}_{c,i}^{t,k} -
        \vec{x}_c^t}^2}
    &\leq \sum_{p=0}^{k-1} \left( 1 + \frac{1}{K-1} \right)^p
        \biggl[
            5 \eta_L^2 (\hat{\sigma}_L^2 + K \sigma_G^2)
            + 5 K \eta_L^2 \norm{\nabla_c f(\vec{x}^t)}^2
            \nonumber \\
            &\qquad~+ \frac{5 K \eta_L^2}{m} \sum_{i=1}^m \norm{\nabla_c
                \hat{F}_i(\vec{\tilde{x}}_i^t) - \nabla_c F_i(\vec{x}^t)}^2
        \biggr]
    \label{driftbd3}
\end{align}
Representing the term in square brackets as $H^t$, we can write:
\begin{align}
    \frac{1}{m} \sum_{i=1}^m \Expsub{t}{\norm{\vec{x}_{c,i}^{t,k} -
        \vec{x}_c^t}^2}
        &\leq (K-1) \left[ \left( 1 + \frac{1}{K-1}\right)^k - 1\right] H^t
    \nonumber \\
        &\leq (K-1) \left[ \left( 1 + \frac{1}{K-1}\right)^K - 1\right] H^t
    \label{eq:kderv} \\
        &\leq 4 K H^t
    \label{eq:reddilemprooffinal}
\end{align}
where, we use the fact that $k \leq K$ and that $1 + 1/(K-1) \geq 1$ in
\eqref{kderv}, and the fact that for $K > 1$, $\{1 + 1/(K-1)\}^K < 5$ in
\eqref{reddilemprooffinal}.  Finally, substituting the value of $H^t$ in
\eqref{reddilemprooffinal}, we obtain the inequality in the lemma.
\qed

\subsection{Additional Lemmas} \label{app:additionallemmas}
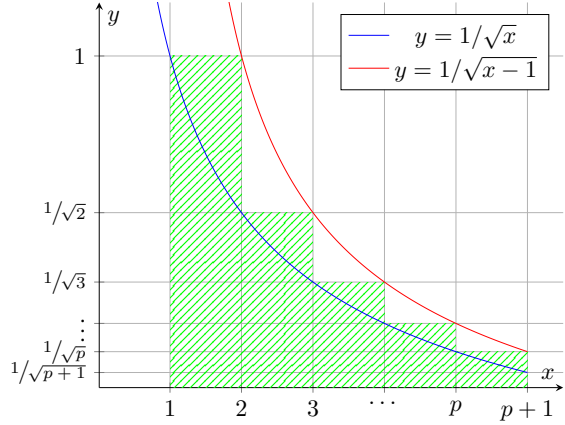
\begin{figure}[ht]
    \centering
    \scalebox{0.9}{
    \begin{tikzpicture}
        \begin{axis}[ axis lines=middle,
            anchor = origin,
            xlabel = $x$, ylabel = $y$,
            ymax = 1.1, ymin = 0.38, xmax = 6.5, xmin = 0.01,
            clip = true,
            xtick = {0, 1, 2, ..., 6},
            xticklabels={, $1$, $2$, $3$, $\dots$, $p$, $p+1$},
            ytick = {0.408248, 0.447213, 0.5, 0.57735, 0.707107, 1},
            yticklabels = {$\sfrac{1}{\sqrt{p+1}}$, $\sfrac{1}{\sqrt{p}}$,
                $\vdots$, $\sfrac{1}{\sqrt{3}}$,
                $\sfrac{1}{\sqrt{2}}$, $1$},
            legend pos=north east, 
            grid = both,
            grid style={line width=.1pt, draw=gray!10},
            major grid style={line width=.2pt,draw=gray!60},
        ]
            \draw[pattern=north east lines, pattern color=green, draw=none] (axis cs:1, 0.38) rectangle (axis cs:2, 1.0);
            \draw[pattern=north east lines, pattern color=green, draw=none] (axis cs:2, 0.38) rectangle (axis cs:3, 0.707107);
            \draw[pattern=north east lines, pattern color=green, draw=none] (axis cs:3, 0.38) rectangle (axis cs:4, 0.57735);
            \draw[pattern=north east lines, pattern color=green, draw=none] (axis cs:4, 0.38) rectangle (axis cs:5, 0.5);
            \draw[pattern=north east lines, pattern color=green, draw=none] (axis cs:5, 0.38) rectangle (axis cs:6, 0.447213);
            \addplot+[mark=none,samples=200,unbounded coords=jump, domain=0.5:6] {1/sqrt(x)};
            \addplot+[mark=none,samples=200,unbounded coords=jump, domain=1.5:6] {1/sqrt(x - 1)};
            \legend{$y=1/\sqrt{x}$, $y=1/\sqrt{x - 1}$};
        \end{axis}
    \end{tikzpicture}
    }
    \caption{\label{fig:series}Illustration of the inequality in \eqref{seriesbd}}
\end{figure}
\begin{lemma}[Series bound] \label{lem:series}
    Given $p \in \Natural$, the series:
    \begin{equation}
        S(p) \triangleq \frac{1}{p} \sum_{r=1}^p \frac{1}{\sqrt{r}} 
        \label{eq:series}
    \end{equation}
    satisfies the bounds
    \begin{equation}
        \frac{2}{\sqrt{p + 1} + 1} \leq S(p) \leq \frac{2}{\sqrt{p}}.
    \end{equation}
    In other words $S(p) = \smallo{\frac{1}{\sqrt{p}}}$.
\end{lemma}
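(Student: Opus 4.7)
The plan is to prove both inequalities via integral comparison, exploiting the fact that $g(x) = 1/\sqrt{x}$ is positive and monotonically decreasing on $(0, \infty)$, which is exactly what the illustrative figure in the paper is depicting.

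For the upper bound, I would compare each term to the integral on an interval immediately to its left. Since $1/\sqrt{x}$ is decreasing, $\frac{1}{\sqrt{r}} \leq \frac{1}{\sqrt{x}}$ for all $x \in [r-1, r]$, which gives $\frac{1}{\sqrt{r}} \leq \int_{r-1}^{r} \frac{dx}{\sqrt{x}}$ for every $r \geq 1$ (the improper integral at $0$ for the $r=1$ term converges). Summing from $r=1$ to $p$ telescopes the right-hand side into $\int_{0}^{p} \frac{dx}{\sqrt{x}} = 2\sqrt{p}$, so dividing by $p$ yields $S(p) \leq 2/\sqrt{p}$.

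For the lower bound, I would go in the opposite direction and compare each term to the integral on the interval immediately to its right: $\frac{1}{\sqrt{r}} \geq \int_{r}^{r+1} \frac{dx}{\sqrt{x}}$. Summing from $r=1$ to $p$ gives $\sum_{r=1}^p \frac{1}{\sqrt{r}} \geq \int_{1}^{p+1} \frac{dx}{\sqrt{x}} = 2\sqrt{p+1} - 2$. Dividing by $p$, the key algebraic manipulation is the rationalization $p = (\sqrt{p+1}-1)(\sqrt{p+1}+1)$, which turns $\frac{2(\sqrt{p+1}-1)}{p}$ into the claimed form $\frac{2}{\sqrt{p+1}+1}$.

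There is no real obstacle here; the argument is a standard Riemann sum sandwich, and the only mildly delicate point is handling the endpoint $r=1$ in the upper bound (since the integrand blows up at $0$), which is resolved by noting that $\int_{0}^{1} x^{-1/2}\,dx = 2$ is finite. Once both sides are established the final $S(p) = \smallo{1/\sqrt{p}}$ remark in the lemma is, strictly speaking, slightly loose (the true behavior is $\Theta(1/\sqrt{p})$), but it is consistent with the upper bound and is all that is actually invoked in the proofs of \thmref{pacbound} and \corref{pacboundlazy} when bounding $\frac{1}{T}\sum_{t=1}^{T} \frac{1}{\sqrt{t}} \leq \frac{2}{\sqrt{T}}$.
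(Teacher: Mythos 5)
Your proof is correct and is essentially the same argument as the paper's: both bound the sum by sandwiching it between $\int_1^{p+1} x^{-1/2}\,dx$ and $\int_1^{p+1}(x-1)^{-1/2}\,dx$ (your per-term comparison on $[r-1,r]$ and $[r,r+1]$ is just the unrolled version of that sandwich), followed by the same rationalization for the lower bound. Your side remark that the concluding $S(p)=\smallo{1/\sqrt{p}}$ claim should really be $\Theta(1/\sqrt{p})$ is also correct, and, as you note, only the upper bound $2/\sqrt{p}$ is actually used downstream.
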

\begin{proof}
    The series in \eqref{series} can be bounded using integrals as follows:
    \begin{equation}
        \frac{1}{p} \int_1^{p+1} \frac{dx}{\sqrt{x}} \leq S(p)
            \leq \frac{1}{p} \int_1^{p+1} \frac{dx}{\sqrt{x - 1}}
        \label{eq:seriesbd}
    \end{equation}
    The reason for this becomes clear from the illustration in \Figref{series}.
    The integrals in \eqref{seriesbd} simplifies to the desired lower and upper
    bounds.
\end{proof}

\section{Notes on experimental setup} \label{app:expnotes}

%\subsection{Notes on main experimental results}
{\bf Dirichlet Sampling}
We use the Dirichlet distribution to simulate client data heterogeneity.
Specifically, we sample the proportion of class labels in each client from a
Dirichlet distribution, and then sample examples uniformly from each class per
client while respecting the proportions sampled from the Dirichlet distribution.
The parameter $\alpha$ controls the degree of heterogeneity with smaller values
indicating a higher variation in the proportion of class labels distributed
across the clients.  In \figref{class_labels}, we show an instance of class
label distribution for $\alpha=1$ for CIFAR-10.

{\bf Note on the cut-layer and auxiliary models:}
There are two important choices in the design of FSL-SAGE: the choice of the
cut-layer and the choice of auxiliary models.
{\em 1) Cut-Layer:} The location of the cut-layer determines the communication
cost of transmitting the cut-layer features.
If the cut-layer features are too large compared to the size of the auxiliary
models, the communication gain of FSL-SAGE is not too large compared to CSE-FSL,
since both methods expend resources in transmitting the cut-layer features.
Although, note that FSL-SAGE does strictly better than CSE-FSL in terms of
communication cost.
{\em 2) Auxiliary:} For our experiments, we choose auxiliary models as small
subsets of the server model for training.
This arbitrary choice is able to demonstrate competitive performance for us, but
it is important to note that the size of the chosen auxiliary can impact the
communication advantage of our method.
We perform ablation experiments demonstrating the effect of auxiliary model size
on final test performance in \secref{aux_size_ablation}.

\section{Additional Experiments} \label{app:expresults}
\begin{figure}
    \centering
    \begin{minipage}[t]{0.53\textwidth}
        \includegraphics[width=\textwidth]{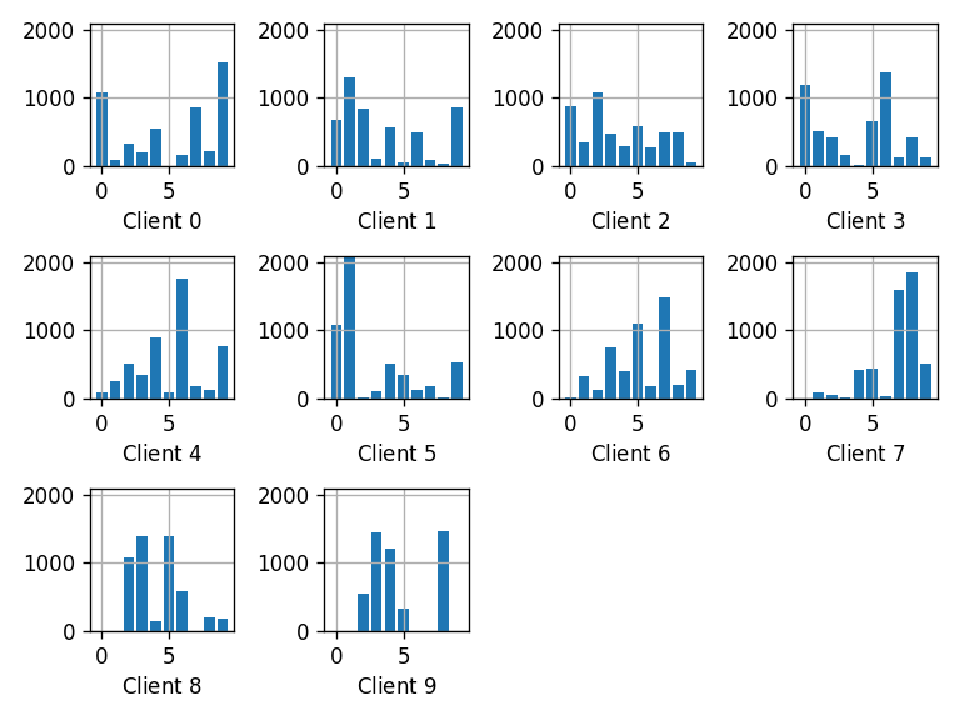}
        \caption{\label{fig:class_labels}Class label distribution for Dirichlet
        distributed data with \\ $\alpha=1$ simulating a heterogeneous data
        distribution across clients.}
    \end{minipage}
    %\hspace{0.1\textwidth}
    \begin{minipage}[t]{0.40\textwidth}
        \centering
        \includegraphics[width=\textwidth,trim={0 .7cm 0 .3cm}]{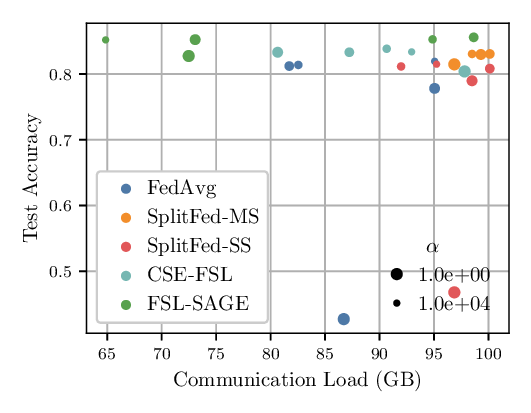}
        \caption{\label{fig:acc_vs_comm_scatter}Scatter plot of best accuracy vs.
            communication load for ResNet-18/CIFAR-10 on $10$ clients.
        }
    \end{minipage}
\end{figure}

%\subsection{Additional experimental results}
\subsection{Accuracy-Communication performance}
In \figref{acc_vs_comm_scatter} we plot the best accuracy achieved and the
corresponding communication cost incurred by the five algorithms for various
values of Dirichlet $\alpha$.
Each point represents a different (algorithm, Dirichlet-$\alpha$) combination.
The best performing algorithms, which maximize accuracy while minimizing
communication costs, would be found at the north-west corner of the plot.
The radius of the points in the plot, indicates the degree of heterogeneity,
with larger points corresponding to lower $\alpha$ and hence more non-i.i.d.
data.
We observe that FSL-SAGE is able to achieve a high level of accuracy for all
tested levels of heterogeneity, being the only method to occupy the north-west
corner of the plot, even for certain high degrees of heterogeneity.

\begin{figure}
    \centering
    \includegraphics[width=0.6\textwidth]{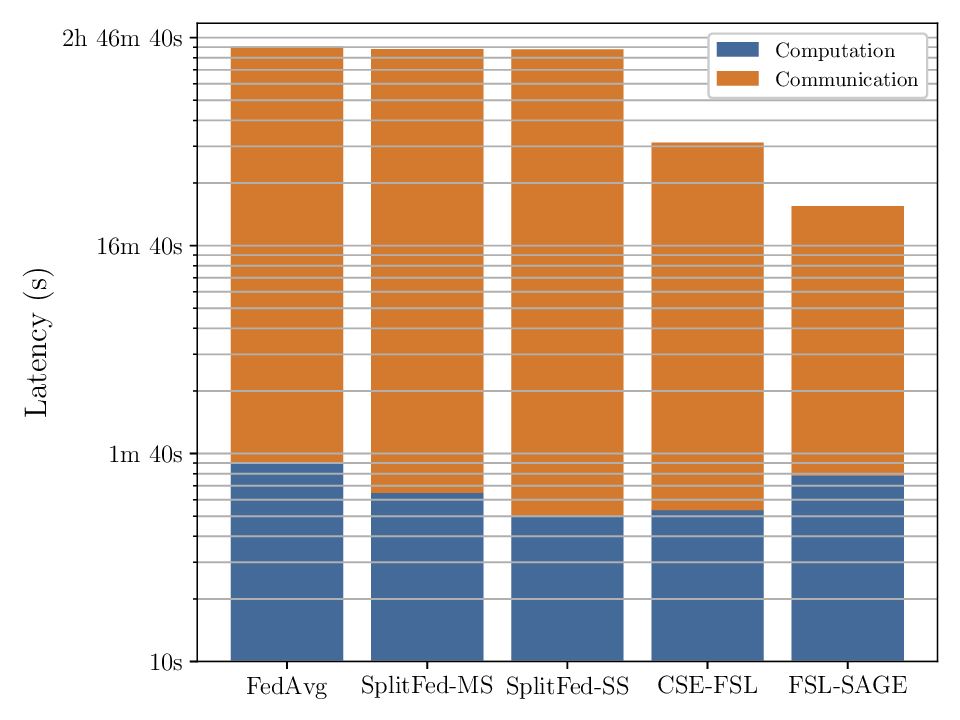}
    \caption{\label{fig:latency}Computation and communication latency in
        various FL methods for $200$ rounds. (Note that the y-axis is in
        logarithmic scale.)}
\end{figure}

\subsection{Latency Analysis}

In \figref{latency}, we analyze the differences in the computation and
communication latencies of the various FL methods used in our CIFAR-10/ResNet-18
experiments.
Note that the computation latency of FSL-SAGE is higher than CSE-FSL due to the
auxiliary alignment process, which can is computationally expensive for the
server. 
However, the communication latency of FSL-SAGE is about half that of CSE-FSL,
since CSE-FSL requires the server to send the auxiliary models back to the
clients after aggregation, while FSL-SAGE only transmits the updated auxiliary
models once from the $S$-server to the clients.
The communication latency of the three baselines, FedAvg, SplitFed-MS and
SplitFed-SS is high because full model transmission for aggregation, and smashed
data transmission at every iteration are very communication intensive.

\begin{figure}
    \centering
    \begin{minipage}[t]{0.46\textwidth}
        \centering
        \includegraphics[width=\textwidth]{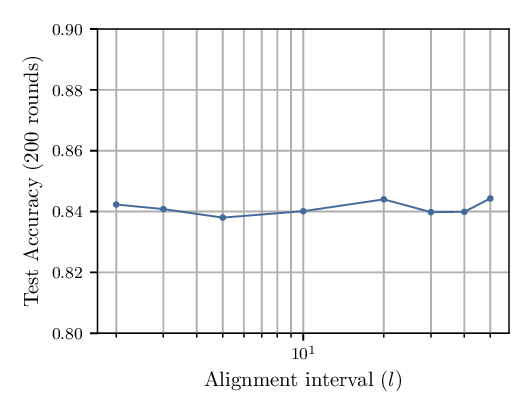}
        \caption{\label{fig:align_abl_cifar10}Effect of auxiliary model size on
        final test performance of CIFAR-10 after $200$ rounds.}
    \end{minipage}
    %\hspace{0.1\textwidth}
    \begin{minipage}[t]{0.46\textwidth}
        \centering
        \includegraphics[width=\textwidth]{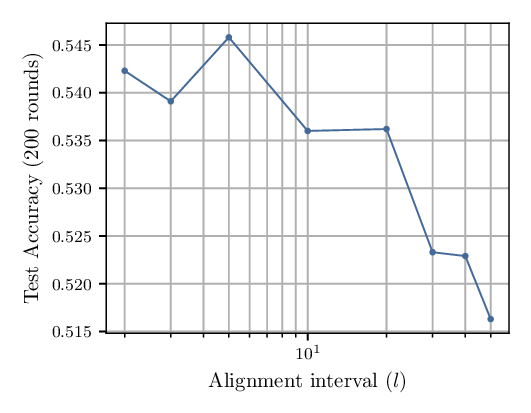}
        \caption{\label{fig:align_abl_cifar100}Effect of auxiliary model size of
        final test performance of CIFAR-100 after $200$ rounds.}
    \end{minipage}
\end{figure}

\subsection{Effect of alignment interval $l$ (Ablation study)}

In Figures \ref{fig:align_abl_cifar10} and \ref{fig:align_abl_cifar100} we plot
the final test accuracy of FSL-SAGE using ResNet-18 on CIFAR-10 and CIFAR-100
datasets respectively, for $200$ rounds of training.
We vary the alignment interval $l$ from once in $2$ rounds to once in $10$
rounds.
Note that, interestingly, there is no clear pattern between $l$ and the
final test accuracy for the CIFAR-10 dataset, likely because CIFAR-10 is not a
difficult dataset for ResNet-18 to learn.
Hence, the server-side model is not very complex, and the auxiliary
model learns to mimic it fairly quickly, thus requiring a far fewer update
frequency.
On the other hand, for a more difficult dataset like CIFAR-100, we see a clear
pattern of deterioration of test accuracy with increasing $l$.
The alignment interval $l$ is an important trade-off parameter in FSL-SAGE, as
it trades-off increased computational load on the server with the final test
performance.
For our main results, we chose a value of $l=10$, which is a suitable compromise
between the two.

\subsection{Effect of auxiliary model size (Ablation study)} \label{sec:aux_size_ablation}
In this section, we study the effect of the size of the auxiliary model on the
performance of the trained model.
We use the CIFAR datasets, and the ResNet-18 model split into a client-side
model and a server-side model as previously described in our main experiments.
The client-side model comprises the first two blocks of ResNet-18, and the last
two blocks with the final fully-connected layer are used as the server-side
model.
We try out the following four variations in the auxiliary model:
\begin{enumerate}
    \item {\bf Linear}: with only the final fully-connected layer of
        ResNet-18.  The size of the auxiliary model is only
        $\approx\SI{5}{\kilo\byte}$.
    \item {\bf Half}: with $\splth{3}{rd}$ and $\nth{4}$ of ResNet-18, but with
        only half the original number of layers, i.e. $1$ layer per block, and
        the final fully connected layer. In this case, the size is
        $\approx\SI{3.5}{\mega\byte}$.
    \item {\bf Ours}: with only the $\splth{3}{rd}$ block and the fully
        connected layer.  The size for this case is $\approx\SI{8}{\mega\byte}$.
    \item {\bf Full}: with the auxiliary model being the same architecture as
        the server-side model. The size is $\approx\SI{40}{\mega\byte}$.
\end{enumerate}
In Figures~\ref{fig:auxsize_ablation_cifar10} and
\ref{fig:auxsize_ablation_cifar100}, we see the effect of using the above four
auxiliary models on the CIFAR-10 and CIFAR-100 datasets.
For CIFAR-10, the increasing auxiliary model size shows a drop in test
performance.
We attribute this to the fact that CIFAR-10 is a relatively simple
classification problem, and perhaps a small auxiliary model is able to replicate
the nuances in the function learned by the server-side model, while a larger
auxiliary model overfits the patterns on the training data.
In fact, both CSE-FSL and FSL-SAGE, are able to learn a simpler linear auxiliary
model much quicker resulting in a good performance for the linear model in all
cases.
More importantly, note that the difference in performance of the \emph{same}
auxiliary model increases between the two algorithms as the model size
increases.
This shows that FSL-SAGE is able to learn a more complex auxiliary model more
efficiently than CSE-FSL.
In \figref{auxsize_ablation_cifar100}, the performance of the FSL-SAGE
improves with increasing auxiliary model sizes, in contrast to CSE-FSL.

\begin{figure}
    \centering
    \begin{minipage}[t]{0.46\textwidth}
        \centering
        \includegraphics[width=\textwidth]{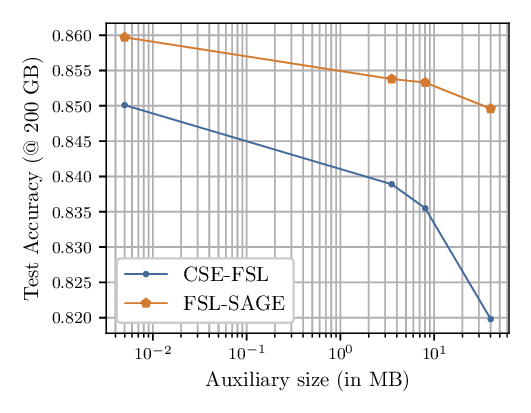}
        \caption{\label{fig:auxsize_ablation_cifar10}Effect of ResNet-18
        auxiliary model size on CIFAR-10.}
    \end{minipage}
    %\hspace{0.1\textwidth}
    \begin{minipage}[t]{0.46\textwidth}
        \centering
        \includegraphics[width=\textwidth]{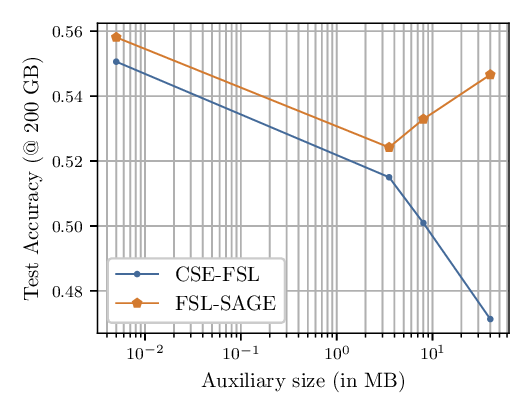}
        \caption{\label{fig:auxsize_ablation_cifar100}Effect of ResNet-18
        auxiliary model size on CIFAR-100.}
    \end{minipage}
\end{figure}

\end{document}